\documentclass{article}

\usepackage{microtype}
\usepackage{graphicx}
\usepackage{subcaption}
\usepackage{booktabs} 

\usepackage{hyperref}
\usepackage[shortlabels]{enumitem}

\usepackage[preprint]{icml2026}

\usepackage{amsmath}
\usepackage{amssymb}
\usepackage{mathtools}
\usepackage{amsthm}

\icmltitlerunning{Efficient Soft Actor-Critic with LLM-Based Action-Level Guidance for Continuous Control}

\usepackage[capitalize,noabbrev]{cleveref}
\newtheorem{thm}{Theorem}
\newtheorem{lem}{Lemma}
\newtheorem{prop}{Proposition}

\usepackage{graphicx}
\usepackage{url}
\usepackage{subcaption}  
\usepackage{amsmath}

\usepackage[textsize=tiny]{todonotes}

\begin{document}

\twocolumn[
  \icmltitle{Efficient Soft Actor-Critic with LLM-Based Action-Level Guidance for Continuous Control}


  \icmlsetsymbol{equal}{*}


  \begin{icmlauthorlist}
    \icmlauthor{Hao Ma}{ucas,casia}
    \icmlauthor{Zhiqiang Pu}{ucas,casia}
    \icmlauthor{Xiaolin Ai}{ucas,casia}
    \icmlauthor{Huimu Wang}{jd}
  \end{icmlauthorlist}

  \icmlaffiliation{ucas}{School of Artificial Intelligence, University of Chinese Academy of Sciences, Beijing, China}
  \icmlaffiliation{casia}{Institute of Automation, Chinese Academy of Sciences, Beijing 100190, China}
  \icmlaffiliation{jd}{JD.com, Beijing, China}

  \icmlcorrespondingauthor{Zhiqiang Pu}{zhiqiang.pu@ia.ac.cn}

  \icmlkeywords{Machine Learning, Reinforcement Learning, LLM}

  \vskip 0.3in
]



\printAffiliationsAndNotice{}  

\begin{abstract}

We present GuidedSAC, a novel reinforcement learning (RL) algorithm that facilitates efficient exploration in vast state-action spaces. GuidedSAC leverages large language models (LLMs) as intelligent supervisors that provide action-level guidance for the Soft Actor-Critic (SAC) algorithm. The LLM-based supervisor analyzes the most recent trajectory using state information and visual replays, offering action-level interventions that enable targeted exploration. Furthermore, we provide a theoretical analysis of GuidedSAC, proving that it preserves the convergence guarantees of SAC while improving convergence speed. Through experiments in both discrete and continuous control environments, including toy text tasks and complex MuJoCo benchmarks, we demonstrate that GuidedSAC consistently outperforms standard SAC and state-of-the-art exploration-enhanced variants (e.g., RND, ICM, and E3B) in terms of sample efficiency and final performance.
\end{abstract}

\section{Introduction}

While reinforcement learning (RL) has demonstrated successes in robot control \citep{johannink2019residual, galljamov2022improving, zhang2024wococo, ma2023eureka}, a fundamental challenge hindering its broader application, particularly in complex robotic tasks, is its well-known high sample complexity, making exploration very inefficient. This issue is significantly amplified with vast state-action spaces, where acquiring a practically useful policy through RL entirely from scratch often proves to be an infeasible undertaking.

Exploration based on intrinsic rewards, such as count-based \citep{bellemare2016unifying, tang2017exploration}, memory-based \citep{savinov2018episodic, jiangepisodic2025}, and prediction-based \citep{burda2018exploration, pathak2017curiosity} methods, has been extensively studied to address the exploration problem by providing bonuses that encourage agents to discover novel states. However, a critical issue remains overlooked: a novel state does not necessarily equate to a valuable state, and it is the valuable states that are crucial for effective exploration. Novel states are relatively easy to encounter, but the likelihood of discovering high-value states or trajectories is exceedingly low. This renders exploration within vast state-action spaces inefficient, even with intrinsic reward methods.

To efficiently explore high-value states, another method is the imitation-based approach, which initially leverages demonstrations to learn a policy and then refines it through online RL \citep{peng2018deepmimic, galljamov2022improving}. This approach effectively learns the prior distribution of the policy and explores around this distribution. Recent studies have shown that such methods can enable humanoid robots, operating in vast state-action spaces, to learn to walk naturally. However, the collection of demonstrations is costly, and dealing with heterogeneous demonstration data is challenging. These limitations impose a barrier to the wide application of the imitation-based approach.


To address the limitations of existing exploration methods, we propose leveraging the concept of a real-time supervisor that can guide the RL learning process based on its observations. This paradigm mirrors how humans learn under supervision, where guidance, even if imperfect, can significantly accelerate learning. To realize such a supervisor capable of providing targeted guidance across diverse scenarios, we turn to large language models (LLMs). Benefiting from vast pretraining data, LLMs acquire a broad understanding of policies and principles applicable to a wide range of tasks, as demonstrated by recent works \citep{jin2024robotgpt, 10529514, yanefficient}. With targeted guidance from LLMs, RL agents can efficiently explore high-value states without relying on costly manual data collection.

We propose a theoretical framework that proves the convergence and improved sample efficiency of the SAC algorithm under action-level guidance from a suboptimal policy, upon which we design an LLM-based supervisor that adaptively generates such guidance, where one LLM provides high-level analysis and another generates low-level implementations in the form of rule-based policies. We integrate this design into SAC to develop GuidedSAC. It is worth noting that our theoretical insights are broadly applicable to value-based algorithms. However, in this paper, we focus on implementing our method based on SAC, given its superior performance in continuous control tasks compared to value-based methods. 

We evaluate GuidedSAC across both discrete toy text tasks and high-dimensional MuJoCo benchmarks. The results show that it consistently achieves better sample efficiency and final performance than standard SAC and state-of-the-art exploration methods (e.g., RND, ICM, and E3B). In summary, our contributions are twofold. (1) We present a theoretical analysis that addresses the key question: How can LLM-generated policies improve the efficiency of RL? (2) We propose the GuidedSAC algorithm, which is empirically validated across both discrete and complex continuous control tasks.



\begin{figure*}[t]
    \centering
    \includegraphics[width=0.7\linewidth]{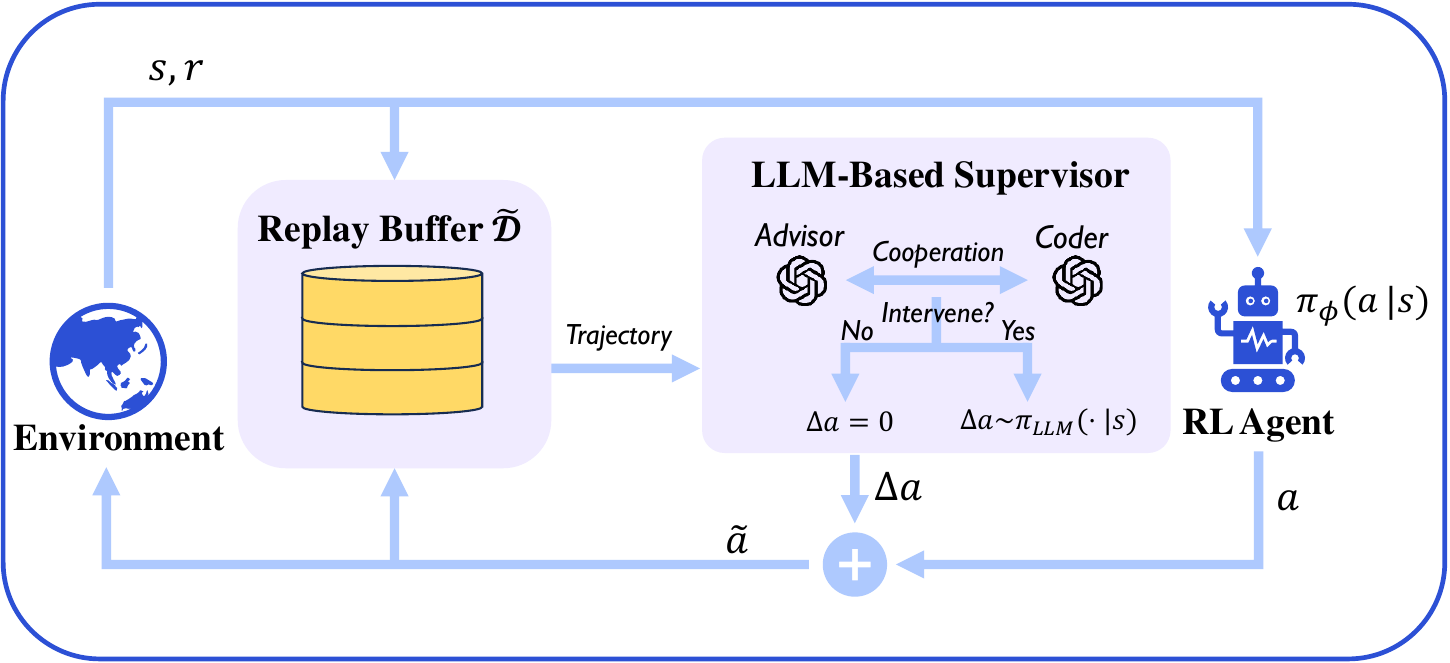}
    \caption{\textbf{The framework of GuidedSAC.} GuidedSAC leverages an LLM-based supervisor to analyze the last trajectory and determine whether intervention is necessary. If intervention is triggered, a residual action $\Delta a$ is added to the original action $a$, resulting in the intervened action $\widetilde{a}$. This intervened action is then stored in the replay buffer, facilitating the discovery of high-value trajectories.}
    \label{fig:framework}
\end{figure*}

\section{Preliminary}


\subsection{Markov Decision Process}
The Markov Decision Process (MDP) provides a fundamental framework for sequential decision-making under uncertainty, defined by the tuple $<S,A,P,R,\gamma>$. Here, $S$ is the state space, $A$ is the action space, $P:S\times A\times S\rightarrow [0,1]$ is the transition probability, $R:S\times A\rightarrow \mathbb{R}$ is the reward function, and $\gamma \in [0,1]$ is the discount factor. The agent's goal is to learn a policy $\pi:S\rightarrow A$ that maximizes the expected return:
\begin{equation}
    J(\pi) = \mathbb{E}_{\tau\sim\pi}[R(\tau)],
\end{equation}
where $\tau = (s_0,a_0,r_0,s_1,a_1,r_1,\ldots)$ is a trajectory generated by $\pi$.

\subsection{Maximum Entropy RL and SAC}
Maximum Entropy RL (MaxEnt RL) extends the standard RL objective by maximizing the policy's entropy in addition to the expected reward. This encourages exploration and can lead to better policies. The objective function is:
\begin{equation}
    J(\pi) = \mathbb{E}_{s_t\sim\rho^\pi, a_t\sim\pi}\left[\sum_{t=0}^\infty \gamma^t (r(s_t,a_t) + \alpha\mathcal{H}(\pi(\cdot|s_t)))\right],
\end{equation}
where $\rho^\pi$ is the state distribution under $\pi$, $\alpha$ is the temperature parameter, and $\mathcal{H}(\pi(\cdot|s_t)) = -\mathbb{E}_{a\sim\pi(\cdot|s_t)}[\log \pi(a|s_t)]$ is the policy entropy.

SAC is a model-free, off-policy algorithm based on MaxEnt RL. It learns a stochastic policy and a soft value function by simultaneously maximizing reward and policy entropy. SAC is known for its sample efficiency and stability in continuous action spaces, utilizing actor-critic architectures and off-policy learning from a replay buffer.

\section{Method}
Although SAC performs well on continuous control problems, it remains inefficient for many complex tasks when learning from scratch, often failing to produce the desired policy without additional guidance. LLMs, leveraging their pretraining knowledge, can provide macro-level insights by identifying areas where the current policy needs improvement and offering targeted guidance. RL can carry out fine-grained policy optimization at the micro level. From this perspective, LLM policies are complementary to RL policies.

In this section, we introduce a principled algorithm, GuidedSAC, for providing action-level guidance during RL training. We build GuidedSAC on SAC due to its strong performance in continuous control and its inherent advantage of using a Q-value-based Boltzmann policy for policy improvement. In contrast, action-level interventions in policy-gradient methods typically mask gradients at intervened steps (see Appendix~\ref{append:pg_intervene}), making them less suitable for learning directly from action-level guidance.

In GuidedSAC, we integrate SAC with an LLM-based supervisor to provide guidance as shown in Fig.~\ref{fig:framework}. The LLM-based supervisor consists of two agents: an advisor and a coder. The advisor analyzes replays of the current policy and provides advice to the coder. The coder then implements this guidance into code (rule-based policies). Through their collaboration, the LLM-based supervisor effectively provides action-level guidance, achieving improved performance in complex environments.

In the following, we first derive the theoretical motivation for GuidedSAC and then provide the implementation of GuidedSAC. Finally, we elaborate on the design of the LLM-based supervisor.

\subsection{Derivation of Guided Soft Actor-Critic}
The algorithm of GuidedSAC is as follows: When the intervention occurs at time step $t$, a state $s_t$ is passed to both $\pi_\phi(\cdot|\cdot)$ and $\pi_{\text{LLM}}(\cdot|\cdot)$. An action $a_t$ is sampled from $\pi_\phi(\cdot | s_t)$, and a residual action $\Delta a_t$ is sampled from $\pi_{\text{LLM}}(\cdot | s_t)$. The final action is computed as $a_t + \Delta a_t$. For simplicity, we denote the policy with intervention as $\pi_{\text{interv}}$, i.e. $(a_t + \Delta a_t) \sim \pi_{\text{interv}}$.

When $\pi_\phi(\cdot | s_t)$ is already performing well enough, LLM intervention may not be necessary. To determine whether intervention is needed, the advisor evaluates the performance over a recent time window and makes a decision for a period of time: $I(s_{(\lfloor t/M \rfloor - 1)M}, \ldots, s_{\lfloor t/M \rfloor M}) \in \{0, 1\}$, where $M$ denotes the trajectory length. When the advisor decides to intervene, $\pi_{\text{interv}}$ will continue intervening for the remainder of the episode. For simplicity, we denote $I(s_{(\lfloor t/M \rfloor - 1)M}, \ldots, s_{\lfloor t/M \rfloor M}) \in \{0, 1\}$ as $I(s_{\leq t})$.

To generalize this process, the intervention decision for a given state $s$ is represented as $g(s) \in \{0, 1\}$. The resulting action $\widetilde{a}_t$ can then be interpreted as being sampled from a mixed behavior policy, expressed as:
\begin{equation}
\widetilde{\pi}_\phi(\cdot | s) = g(s) \pi_{\text{interv}}(\cdot | s) + (1 - g(s)) \pi_\phi(\cdot | s).
\end{equation}

The advisor LLM analyzes replays from the recent past (including both state and images) to decide whether intervention is necessary and the coder LLM provides guidance on which aspects require adjustment. The update equations are given through the following lemmas. We refer readers to Appendix~\ref{appendix:proof} for full proofs.

\begin{lem}[Guided Policy Evaluation]
    Consider the guided Bellman backup operator $\mathcal{T}^{\widetilde{\pi}}$ and a mapping $Q:\mathcal{S}\times\mathcal{A}\rightarrow\mathbb{R}$, 
    and define $Q^{k+1}=\mathcal{T}^{\widetilde{\pi}} Q^k$. Then the sequence $Q^k$ will converge to the Q-value of $\widetilde{\pi}$ as $k\rightarrow\infty$.
\end{lem}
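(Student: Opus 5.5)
The plan is to mirror the soft policy evaluation lemma of SAC, applied to the mixed behavior policy $\widetilde{\pi}_\phi(\cdot|s) = g(s)\pi_{\text{interv}}(\cdot|s) + (1-g(s))\pi_\phi(\cdot|s)$. For each $s$, $\widetilde{\pi}(\cdot|s)$ is a convex combination of two probability distributions, hence itself a valid stochastic policy. The only subtlety is that $g(s)$ is treated as a deterministic function of the state, as in the generalization above.

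First I will fix the guided soft Bellman operator
\begin{equation*}
\mathcal{T}^{\widetilde{\pi}} Q(s,a) = r(s,a) + \gamma\, \mathbb{E}_{s'\sim P(\cdot|s,a)}\bigl[V(s')\bigr],
\end{equation*}
with
\begin{equation*}
V(s') = \mathbb{E}_{a'\sim\widetilde{\pi}(\cdot|s')}\bigl[Q(s',a') - \alpha \log \widetilde{\pi}(a'|s')\bigr].
\end{equation*}
I will then define the entropy-augmented reward
\begin{equation*}
r_{\widetilde{\pi}}(s,a) = r(s,a) + \gamma\,\alpha\, \mathbb{E}_{s'}\bigl[\mathcal{H}(\widetilde{\pi}(\cdot|s'))\bigr],
\end{equation*}
so that the operator takes the standard form
\begin{equation*}
\mathcal{T}^{\widetilde{\pi}} Q(s,a) = r_{\widetilde{\pi}}(s,a) + \gamma\,\mathbb{E}_{s',a'\sim\widetilde{\pi}}\bigl[Q(s',a')\bigr].
\end{equation*}
This is ordinary policy evaluation for a fixed policy. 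As in SAC, I assume $|\mathcal{A}|<\infty$ (or bounded entropy) and bounded rewards, so that $r_{\widetilde{\pi}}$ is bounded.

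Next I will show that $\mathcal{T}^{\widetilde{\pi}}$ is a $\gamma$-contraction in the sup norm. For any $Q_1,Q_2$ the reward and entropy terms cancel, which gives
\begin{equation*}
\|\mathcal{T}^{\widetilde{\pi}}Q_1 - \mathcal{T}^{\widetilde{\pi}}Q_2\|_\infty \le \gamma \sup_{s,a}\mathbb{E}_{s',a'}\bigl|Q_1(s',a')-Q_2(s',a')\bigr| \le \gamma\|Q_1-Q_2\|_\infty .
\end{equation*}
With $\gamma<1$, Banach's fixed-point theorem then gives a unique fixed point, and $Q^k\to Q^*$ geometrically. Finally I will identify this fixed point with the soft Q-value of $\widetilde{\pi}$. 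The soft Q-function $Q^{\widetilde{\pi}}(s,a)=r(s,a)+\mathbb{E}\bigl[\sum_{t\ge1}\gamma^t(r_t+\alpha\mathcal{H}(\widetilde{\pi}(\cdot|s_t)))\bigr]$ satisfies the Bellman equation by a one-step unrolling, so by uniqueness it equals the fixed point.

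The main obstacle is the well-definedness of $\log\widetilde{\pi}$ and of the mixture itself. In practice $g$ depends on a window of past states, $I(s_{\le t})$, so the behavior policy is not Markov in $s$. I will handle this by adopting the paper's stated simplification that $g=g(s)$ is stationary; alternatively, one could augment the state with the window. I also need the entropy of the mixture to be finite. This holds for finite $\mathcal{A}$. In the continuous case it holds when both components have bounded entropy, since mixture entropy is bounded by the components' entropies plus $\log 2$. I will state this boundedness assumption explicitly, exactly as SAC does.
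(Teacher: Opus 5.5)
Your proof is correct and uses the same argument as the paper: the reward terms cancel, so $\mathcal{T}^{\widetilde{\pi}}$ is a $\gamma$-contraction in $\|\cdot\|_\infty$, and the iterates converge. The differences are refinements in your favor. The paper's backup omits the entropy term, so its fixed point is the ordinary rather than the soft Q-value, even though the subsequent policy-improvement lemma works with soft values. The paper also leaves the Banach fixed-point step, the identification of the fixed point with $Q^{\widetilde{\pi}}$, and the boundedness and Markovian-$g$ assumptions implicit, whereas you state them.
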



\begin{lem}[Guided Policy Improvement]
    Let $\widetilde{\pi}_{\mathrm{old}} \in \Pi$ and update policy following Equation~(\ref{eq:policy_update}) to get $\pi_{\mathrm{new}}$. 
    Then $V^{\pi_{\mathrm{new}}}\left(s_t\right) \geq V^{\widetilde{\pi}_{\mathrm{old}}}\left(s_t\right)$ for all $s_t\in \mathcal{S}$.
\end{lem}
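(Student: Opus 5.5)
The argument adapts the soft policy improvement proof of SAC (Haarnoja et al.), with $\widetilde{\pi}_{\mathrm{old}}$ in the role of the old policy. I assume Equation~(\ref{eq:policy_update}) is the SAC KL projection evaluated at the soft Q-function of the behavior policy:
\begin{equation*}
\pi_{\mathrm{new}}(\cdot|s_t)=\arg\min_{\pi'\in\Pi} D_{\mathrm{KL}}\Bigl(\pi'(\cdot|s_t)\,\Big\|\,\frac{\exp\bigl(Q^{\widetilde{\pi}_{\mathrm{old}}}(s_t,\cdot)/\alpha\bigr)}{Z^{\widetilde{\pi}_{\mathrm{old}}}(s_t)}\Bigr).
\end{equation*}
Here $Q^{\widetilde{\pi}_{\mathrm{old}}}$ is the fixed point of the guided backup from the Guided Policy Evaluation lemma, so it is well defined. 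The soft value is $V^{\pi}(s)=\mathbb{E}_{a\sim\pi}[Q^{\pi}(s,a)-\alpha\log\pi(a|s)]$.

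\textbf{Step 1: one-step comparison.} Since $\widetilde{\pi}_{\mathrm{old}}\in\Pi$ (the statement assumes this), it is a feasible candidate in the minimization, so
\[
J(\pi_{\mathrm{new}})\le J(\widetilde{\pi}_{\mathrm{old}}).
\]
Expanding the KL, the partition function $\log Z$ depends only on $s_t$ and cancels. This gives
\[
\mathbb{E}_{a\sim\pi_{\mathrm{new}}}\bigl[Q^{\widetilde{\pi}_{\mathrm{old}}}(s_t,a)-\alpha\log\pi_{\mathrm{new}}(a|s_t)\bigr]\ \ge\ V^{\widetilde{\pi}_{\mathrm{old}}}(s_t)
\]
for every $s_t$. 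The right-hand side is $V^{\widetilde{\pi}_{\mathrm{old}}}(s_t)$ because the soft Bellman consistency of $Q^{\widetilde{\pi}_{\mathrm{old}}}$ under its own (mixed) policy is exactly what Guided Policy Evaluation provides.

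\textbf{Step 2: unroll the soft Bellman equation.} Write
\[
Q^{\widetilde{\pi}_{\mathrm{old}}}(s_t,a_t)=r(s_t,a_t)+\gamma\,\mathbb{E}_{s_{t+1}}\bigl[V^{\widetilde{\pi}_{\mathrm{old}}}(s_{t+1})\bigr].
\]
Bound $V^{\widetilde{\pi}_{\mathrm{old}}}(s_{t+1})$ from above using Step 1 at $s_{t+1}$, then repeat. After $n$ steps this yields a partial return under $\pi_{\mathrm{new}}$ plus a $\gamma^n$-discounted tail. Letting $n\to\infty$, this shows $Q^{\widetilde{\pi}_{\mathrm{old}}}(s_t,a_t)\le Q^{\pi_{\mathrm{new}}}(s_t,a_t)$. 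Taking the $\pi_{\mathrm{new}}$-expectation and subtracting the entropy term then gives $V^{\pi_{\mathrm{new}}}(s_t)\ge V^{\widetilde{\pi}_{\mathrm{old}}}(s_t)$.

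\textbf{Main obstacle.} The delicate point is legitimacy rather than calculation. Two things need justifying.
\begin{itemize}
\item Feasibility: $\widetilde{\pi}_{\mathrm{old}}\in\Pi$ even though it is a state-dependent mixture with a shifted LLM policy. I take this as the lemma's hypothesis.
\item The limit in Step 2: the $\gamma^n$ tail must vanish, and the soft Q-values must be finite. For this I would invoke the bounded-reward and bounded-entropy assumptions of the finite-action setting, as in SAC.
\end{itemize}
A secondary subtlety is that the intervention decision $g$ really depends on the history $s_{\le t}$. I would treat it as a function $g(s)$ of the current state, as the paper's generalization does, so that $\widetilde{\pi}_{\mathrm{old}}$ is Markov and Step 1 applies pointwise.
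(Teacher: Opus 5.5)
Your proposal is correct and follows the paper's proof step for step. Feasibility of $\widetilde{\pi}_{\mathrm{old}}\in\Pi$ gives $J(\pi_{\mathrm{new}})\le J(\widetilde{\pi}_{\mathrm{old}})$, the state-only partition function cancels to give the one-step soft inequality, and unrolling the soft Bellman equation yields $Q^{\pi_{\mathrm{new}}}\ge Q^{\widetilde{\pi}_{\mathrm{old}}}$. Your closing step (taking the $\pi_{\mathrm{new}}$-expectation and reusing the one-step inequality) justifies the value inequality more explicitly than the paper's ``by definition.'' One small inaccuracy: the soft consistency of $Q^{\widetilde{\pi}_{\mathrm{old}}}$ does not literally come from Guided Policy Evaluation, since the paper's displayed guided backup omits the entropy term; it comes from standard soft policy evaluation, which is the reading the paper's own proof tacitly uses.
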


\begin{equation} \label{eq:policy_update_app}
    \begin{aligned}
\pi_{\mathrm{new}}\left(\cdot\mid s_t\right) & =\arg \min _{\pi^{\prime} \in \Pi} D_{\mathrm{KL}}\left(\pi^{\prime}\left(\cdot \mid s_t\right) \Bigg\Vert 
\frac{\exp Q^{\widetilde{\pi}_{\mathrm{old}}}\left(s_t, \cdot\right)}{\log Z^{\widetilde{\pi}_{\mathrm{old}}}\left(s_t\right)}\right) \\
& =\arg \min _{\pi^{\prime} \in \Pi} J_{\widetilde{\pi}_{\mathrm{old}}}\left(\pi^{\prime}\left(\cdot \mid s_t\right)\right),\ \forall s_t \in S .
\end{aligned}
\end{equation}

From the above two lemmas, we can tell that the guidance in the behavior policy $\widetilde{\pi}$ does not affect the convergence of $\pi$. This key finding directly informs our design: we can freely guide the behavior policy using any intervention policy $\pi_{\text{interv}}$. By training on the data it collects, $\pi$ is guaranteed to converge, which can be expressed as the following theorem.

\begin{thm}[Convergence of GuidedSAC]
\label{theorem:conv}
    By repeatedly applying the guided policy evaluation in Equation~(\ref{eq:policy_evaluation}) and guided policy improvement in Equation~(\ref{eq:policy_update}), 
    the policy network $\pi_{\mathrm{new}}$ converges to the optimal policy $\pi^*$ if $V^{\widetilde{\pi}}\left(s\right) \geq V^{\pi}\left(s\right)$ for all $s \in S$.
\end{thm}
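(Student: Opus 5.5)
We follow the standard soft policy iteration argument of SAC, adapted so that each evaluation step is carried out for the guided behavior policy $\widetilde{\pi}$ rather than for $\pi$ itself. Let $\pi_i$ be the policy network after $i$ iterations and $\widetilde{\pi}_i = g\,\pi_{\text{interv}} + (1-g)\pi_i$ the induced behavior policy. As in SAC, we work in the tabular or finite-action setting (or assume bounded rewards and entropy), so that all soft $Q$-values are bounded. By the Guided Policy Evaluation lemma, repeated application of $\mathcal{T}^{\widetilde{\pi}_i}$ yields $Q^{\widetilde{\pi}_i}$. The update in Equation~(\ref{eq:policy_update}) then produces $\pi_{i+1}$.

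\textbf{Step 1: monotonicity of the sequence $\pi_i$.} The Guided Policy Improvement lemma gives $V^{\pi_{i+1}}(s) \ge V^{\widetilde{\pi}_i}(s)$ for all $s$. The hypothesis of Theorem~\ref{theorem:conv}, applied to $\pi=\pi_i$, gives $V^{\widetilde{\pi}_i}(s) \ge V^{\pi_i}(s)$. Chaining the two yields
\begin{equation*}
V^{\pi_{i+1}}(s) \ge V^{\widetilde{\pi}_i}(s) \ge V^{\pi_i}(s) \quad \forall s\in S .
\end{equation*}
The analogous ordering for $Q$ follows by one application of the soft Bellman equation. Hence $Q^{\pi_i}$ is pointwise nondecreasing. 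Since rewards and entropy are bounded, the sequence is also bounded above, so it converges pointwise to some limit, and the policies $\pi_i$ converge to some $\pi^\dagger \in \Pi$.

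\textbf{Step 2: identifying the limit as optimal.} At the limit, the chain above forces $V^{\pi^\dagger} = V^{\widetilde{\pi}^\dagger} = V^{\pi_{\text{new}}}$. Moreover, the KL projection of the Boltzmann distribution of $Q^{\widetilde{\pi}^\dagger}$ returns $\pi^\dagger$ itself. Because $V^{\widetilde{\pi}^\dagger} = V^{\pi^\dagger}$ and the evaluation fixed points coincide, we would argue that $Q^{\widetilde{\pi}^\dagger} = Q^{\pi^\dagger}$. This holds in particular on states where $g=0$. On states where $g=1$, we use the equality of values together with the soft Bellman equation.

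Consequently, $\pi^\dagger$ is a fixed point of the ordinary SAC update, i.e.\ $J_{\pi^\dagger}(\pi^\dagger) \le J_{\pi^\dagger}(\pi)$ for all $\pi\in\Pi$. We then invoke the original SAC argument: for any $\pi\in\Pi$, expanding the soft Bellman equation repeatedly shows $Q^{\pi^\dagger} \ge Q^{\pi}$. Hence $\pi^\dagger = \pi^*$ within $\Pi$.

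\textbf{Main obstacle.} We expect the main difficulty to be Step 2, specifically transferring fixed-point optimality from the behavior $Q$-function $Q^{\widetilde{\pi}^\dagger}$ to $Q^{\pi^\dagger}$. The improvement step uses $Q^{\widetilde{\pi}}$, not $Q^{\pi}$. We would first try to show that $Q^{\widetilde{\pi}^\dagger} = Q^{\pi^\dagger}$ pointwise. If that fails, we would argue directly that $\pi^\dagger$ is greedy, in the soft sense, with respect to a $Q$-function that dominates its own $Q^{\pi^\dagger}$. The inequality $Q^{\widetilde{\pi}^\dagger} \ge Q^{\pi^\dagger}$ comes from the theorem's hypothesis, and combining this with the squeeze in Step 1 should yield a soft Bellman optimality equation for $Q^{\pi^\dagger}$.
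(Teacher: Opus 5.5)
Your Step 1 is the paper's entire proof. The paper chains Guided Policy Improvement with the hypothesis to get $V^{\pi_{\mathrm{new}}}\ge V^{\widetilde{\pi}_{\mathrm{old}}}\ge V^{\pi_{\mathrm{old}}}$, invokes boundedness, and then simply asserts that $\pi$ ``will eventually converge to $\pi^*$.'' Monotone and bounded only gives convergence to \emph{some} limit, so the paper never shows that the limit is optimal. Your Step 2 supplies exactly that missing half, mirroring the fixed-point part of the original SAC convergence proof. Your proposal is therefore the paper's argument completed, and it is the more rigorous of the two. The ``main obstacle'' you anticipate is not one: for every policy $\mu$, $Q^{\mu}(s,a)=r(s,a)+\gamma\,\mathbb{E}_{s'\sim p}[V^{\mu}(s')]$, so $V^{\widetilde{\pi}^\dagger}=V^{\pi^\dagger}$ gives $Q^{\widetilde{\pi}^\dagger}=Q^{\pi^\dagger}$ at every $(s,a)$ in one line. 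The case split on $g$ is unnecessary. Also, ``it holds where $g=0$'' is not by itself a reason, since $Q$ at such states still depends on intervention downstream.

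One step you share with the paper (and with SAC) is left unproved: that convergence of the values makes the policies $\pi_i$ converge. In the tabular setting with unrestricted $\Pi$ you can close it without forming a limiting behavior policy. Note that $\pi_{i+1}\propto\exp Q^{\widetilde{\pi}_i}$ with $Q^{\widetilde{\pi}_i}=r+\gamma\,\mathbb{E}_{s'}[V^{\widetilde{\pi}_i}(s')]$, and your squeeze forces $V^{\widetilde{\pi}_i}\to V^\infty:=\lim_i V^{\pi_i}$. Hence $\pi_{i+1}$ converges to the Boltzmann policy $\pi^\dagger$ of $Q^\infty:=r+\gamma\,\mathbb{E}_{s'}[V^\infty(s')]$. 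Continuity of $\mu\mapsto V^\mu$ gives $V^{\pi^\dagger}=V^\infty$, so $Q^{\pi^\dagger}=Q^\infty$ and $\pi^\dagger\propto\exp Q^{\pi^\dagger}$. This says $Q^{\pi^\dagger}(s,a)=r(s,a)+\gamma\,\mathbb{E}_{s'}[\log\sum_{a'}\exp Q^{\pi^\dagger}(s',a')]$, the soft Bellman optimality equation.

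This version also lets the gate vary across iterations, which is the case that matters. With a fixed $g$, once $Q^{\widetilde{\pi}^\dagger}=Q^{\pi^\dagger}$ and $\pi^\dagger\propto\exp Q^{\pi^\dagger}$, one has $V^{\widetilde{\pi}^\dagger}(s)=V^{\pi^\dagger}(s)-D_{\mathrm{KL}}(\widetilde{\pi}^\dagger(\cdot|s)\,\|\,\pi^\dagger(\cdot|s))$. Your limit identity then forces $\widetilde{\pi}^\dagger=\pi^\dagger=\pi^*$. So with fixed $g$ and fixed $\pi_{\text{interv}}$, the hypothesis can hold along the whole sequence only if $\pi_{\text{interv}}=\pi^*$ wherever $g>0$. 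The non-trivial regime is a gate that is eventually switched off, as the paper's advisor does. A Step 2 built on a single limiting $\widetilde{\pi}^\dagger$ does not cover that regime.
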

The assumption that $V^{\widetilde{\pi}}\left(s\right) \geq V^{\pi}\left(s\right)$ for all $s \in S$ is reasonable, as we can always choose not to intervene when $\pi$ is sufficiently good, thereby ensuring the assumption remains satisfied. In the following proposition, we derive a theoretical analysis to show how the quality of the guidance policy $\pi_{\text{interv}}$ and intervention decision $g(\cdot)$ affect the efficiency of the GuidedSAC algorithm.
\begin{prop}[Single Step Improvement] \label{prop:single_step}
    Under the assumption that the guidance is superior to the current policy, that is, $V^{\pi_{\mathrm{interv}}}(s) \geq V^{\pi_{\mathrm{old}}}(s)$ if $g(s) > 0$, 
    the lower bound of the improvement is higher than without guidance.
\end{prop}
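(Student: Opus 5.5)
The plan is to make "lower bound of the improvement" concrete by chaining Guided Policy Improvement with a comparison between the guided behavior policy and the unguided one. The Guided Policy Improvement lemma gives $V^{\pi_{\mathrm{new}}}(s)\ge V^{\widetilde{\pi}_{\mathrm{old}}}(s)$ for all $s$. The standard SAC lemma, the unguided case with $g\equiv 0$, gives only $V^{\pi_{\mathrm{new}}}(s)\ge V^{\pi_{\mathrm{old}}}(s)$. So I will define the guaranteed improvement as $V^{\widetilde{\pi}_{\mathrm{old}}}(s)-V^{\pi_{\mathrm{old}}}(s)\ge 0$. The proposition then reduces to showing $V^{\widetilde{\pi}_{\mathrm{old}}}(s)\ge V^{\pi_{\mathrm{old}}}(s)$ for every $s$ under the hypothesis $V^{\pi_{\mathrm{interv}}}(s)\ge V^{\pi_{\mathrm{old}}}(s)$ whenever $g(s)>0$. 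I will also aim to show the inequality is strict on states where the guidance is strictly better.

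The key steps, in order, are as follows.

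\emph{Step 1.} Write the mixed policy pointwise as $\widetilde{\pi}_{\mathrm{old}}(\cdot|s)=g(s)\pi_{\mathrm{interv}}(\cdot|s)+(1-g(s))\pi_{\mathrm{old}}(\cdot|s)$. Its one-step soft Bellman backup, applied to $V^{\pi_{\mathrm{old}}}$, is then a convex combination of the backups under $\pi_{\mathrm{interv}}$ and $\pi_{\mathrm{old}}$. The entropy term needs care here, because entropy is concave in the policy; this helps, giving $\mathcal{H}(\widetilde{\pi})\ge g\,\mathcal{H}(\pi_{\mathrm{interv}})+(1-g)\mathcal{H}(\pi_{\mathrm{old}})$.

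\emph{Step 2.} Show that $(\mathcal{T}^{\widetilde{\pi}}V^{\pi_{\mathrm{old}}})(s)\ge V^{\pi_{\mathrm{old}}}(s)$. On states with $g(s)=0$ this holds with equality. On states with $g(s)>0$ it follows from the hypothesis, after converting it into a one-step statement.

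\emph{Step 3.} Iterate, using monotonicity of $\mathcal{T}^{\widetilde{\pi}}$ and its $\gamma$-contraction. The Guided Policy Evaluation lemma gives convergence to $V^{\widetilde{\pi}_{\mathrm{old}}}$, so $V^{\widetilde{\pi}_{\mathrm{old}}}\ge V^{\pi_{\mathrm{old}}}$.

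\emph{Step 4.} Combine with the improvement lemma to obtain $V^{\pi_{\mathrm{new}}}\ge V^{\widetilde{\pi}_{\mathrm{old}}}\ge V^{\pi_{\mathrm{old}}}$. This lower bound dominates the unguided one. The gap can be quantified via a performance-difference identity, $V^{\widetilde{\pi}}(s)-V^{\pi_{\mathrm{old}}}(s)=\frac{1}{1-\gamma}\mathbb{E}_{s'\sim d^{\widetilde{\pi}}_s}\big[g(s')\,(\text{soft advantage of }\pi_{\mathrm{interv}}\text{ over }\pi_{\mathrm{old}}\text{ at }s')\big]$. This shows explicitly how the quality of $\pi_{\mathrm{interv}}$ and the choice of $g$ enter.

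The main obstacle is Step 2. The hypothesis compares full value functions, $V^{\pi_{\mathrm{interv}}}\ge V^{\pi_{\mathrm{old}}}$, but the argument needs a one-step advantage, $\mathbb{E}_{a\sim\pi_{\mathrm{interv}}}[Q^{\pi_{\mathrm{old}}}(s,a)-\log\pi_{\mathrm{interv}}(a|s)]\ge V^{\pi_{\mathrm{old}}}(s)$. Value dominance does not imply this in general. My first approach is to use the intervention structure: once triggered, $\pi_{\mathrm{interv}}$ runs for the rest of the episode, so on the intervened region the trajectory is generated entirely by $\pi_{\mathrm{interv}}$. The value of $\widetilde{\pi}$ there is therefore exactly $V^{\pi_{\mathrm{interv}}}$, and the hypothesis applies directly. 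On the non-intervened region, I will apply a one-step decomposition and bound the continuation value by the maximum of the two values, then close the argument by induction over the horizon, or equivalently by a fixed-point argument. If this fails, I will state the proposition under the slightly stronger one-step (advantage) form of the assumption, which Steps 3–4 then handle routinely.
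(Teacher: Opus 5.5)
Your proof is correct, and its skeleton matches the paper's. Both read ``lower bound of the improvement'' as the chain $V^{\pi_{\mathrm{new}}}\ge V^{\widetilde{\pi}_{\mathrm{old}}}\ge V^{\pi_{\mathrm{old}}}$, with the first inequality taken from Guided Policy Improvement. The routes differ in the middle inequality.

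\textbf{How the paper gets the middle inequality.} The paper obtains it in one line. It substitutes $V^{\widetilde{\pi}_{\mathrm{old}}}(s_{t+1}) = g(s_{t+1})V^{\pi_{\mathrm{interv}}}(s_{t+1}) + (1-g(s_{t+1}))V^{\pi_{\mathrm{old}}}(s_{t+1})$ into the Bellman equation and then applies the hypothesis term by term. That identity treats the soft value as affine in the policy and ignores that control can change hands along a trajectory. For an arbitrary state-dependent $g$ the identity is false, and so is the conclusion. For example, let $\pi_{\mathrm{interv}}$ steer from a state with $g=1$ into a state with $g=0$ where only $\pi_{\mathrm{interv}}$, not $\pi_{\mathrm{old}}$, does well.

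\textbf{What your route does instead.} You isolate the assumption that makes the proposition work: the paper's convention that an intervention persists for the rest of the episode, so the set $\{g=1\}$ is absorbing. You then prove what is actually true under it. On $\{g=1\}$ you get equality $V^{\widetilde{\pi}_{\mathrm{old}}}=V^{\pi_{\mathrm{interv}}}$. On $\{g=0\}$ you get only $V^{\widetilde{\pi}_{\mathrm{old}}}\ge V^{\pi_{\mathrm{old}}}$, not equality.

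\textbf{A cleaner fixed-point step.} Start the monotone iteration at the paper's own expression $W=g\,V^{\pi_{\mathrm{interv}}}+(1-g)\,V^{\pi_{\mathrm{old}}}$ rather than at $V^{\pi_{\mathrm{old}}}$. Absorption gives $\mathcal{T}^{\widetilde{\pi}}W=W$ on $\{g=1\}$. The bound $W\ge V^{\pi_{\mathrm{old}}}$ gives $\mathcal{T}^{\widetilde{\pi}}W\ge W$ on $\{g=0\}$. Monotonicity and contraction then yield $V^{\widetilde{\pi}_{\mathrm{old}}}\ge W\ge V^{\pi_{\mathrm{old}}}$. This avoids the one-step advantage condition you flagged in Step 2, and it recovers the paper's displayed bound on $Q^{\pi_{\mathrm{new}}}$ with the equality corrected to $\ge$.

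\textbf{Your performance-difference remark.} It is an identity for binary $g$, but only an inequality for fractional $g$, because entropy is concave. Also, the one-step advantages inside it are not individually signed by the hypothesis. Nonnegativity of the total again comes from absorption, by telescoping along the post-entry $\pi_{\mathrm{interv}}$ trajectory.

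\textbf{Trade-off.} The paper's argument buys brevity and an explicit formula showing how $g$ and $\pi_{\mathrm{interv}}$ enter. Yours buys a valid proof and pins down when the proposition actually holds: binary absorbing $g$, or otherwise your stronger one-step assumption.
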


\begin{proof}
    By definition, $\widetilde{\pi}_{old}(\cdot|s)=g(s)\pi_{\text{interv}}(\cdot|s) + (1-g(s))\pi_{\mathrm{old}}(\cdot|s)$. 
    We can decompose $V^{\widetilde{\pi}_{\mathrm{old}}}$ in Equation~(\ref{eq:Bellman improvement}) such that
    \begin{equation}
        \begin{aligned}
            & Q^{\widetilde{\pi}_{\mathrm{old}}}\left(s_t, a_t\right) =r\left(s_t, a_t\right)+\gamma \mathbb{E}_{s_{t+1} \sim p}\left[V^{\widetilde{\pi}_{\mathrm{old}}}\left(s_{t+1}\right)\right] \\
            & =r\left(s_t, a_t\right)+\gamma \mathbb{E}_{s_{t+1} \sim p} \Big[g(s_{t+1})V^{\pi_{\text{interv}}}\left(s_{t+1}\right) \\
            & \quad + \big(1-g(s_{t+1})\big) V^{\pi_{\mathrm{old}}}\left(s_{t+1}\right) \Big] \\
            & \leq Q^{\pi_{\mathrm{new}}}\left(s_t, a_t\right).
        \end{aligned}
    \end{equation}
\end{proof}

When $g(s_{t+1}) = 0$, GuidedSAC is equivalent to SAC. When $g(s_{t+1}) = 1$, the improvement is guaranteed if $V^{\pi_{\text{interv}}}(s_{t+1}) \geq V^{\pi_{\mathrm{old}}}(s_{t+1})$.
That is, $\pi_{\text{interv}}$ does not need to be optimal to guide $\pi_{\mathrm{old}}$, it only needs to ensure that $\pi_{\text{interv}}$ is better than $\pi_{\mathrm{old}}$ when guidance occurs to achieve reliable improvement. 
This insight suggests that two elements are crucial for GuidedSAC: a good intervention policy $\pi_{\text{interv}}$ and a good $g(\cdot)$ to identify when to guide.

To summarize, Lemmas~1 and 2, along with Theorem~1, demonstrate that SAC can still converge under action-level interventions, as long as the mixed policy \( \widetilde{\pi} \) outperforms the original policy \( \pi \). The function \( g(\cdot) \) ensures this condition is met, even in difficult situations. Proposition~1 reveals that intervention policies need not be optimal to be beneficial: as long as the intervention policy \( \pi_{\text{interv}} \) outperforms the current policy locally, it can significantly enhance sample efficiency. This highlights the strength of the proposed framework, which uses imperfect but helpful guidance to speed up learning and tackle challenges in RL for complex continuous control tasks.

\subsection{Guided Soft Actor-Critic}
Based on the previous derivations, we implement GuidedSAC Equation~(\ref{eq:policy_evaluation}) and Equation~(\ref{eq:policy_update}) with neural networks. 
In the part of state value estimation, we use TD loss to update the state value network and the Q network. Since the algorithm is based on SAC, the state value will have an item related to the entropy of the policy. The update of soft value estimation is as follows:

\begin{equation}
\begin{split}
     J_V(\psi) = \mathbb{E}_{\mathbf{s}_t \sim \widetilde{\mathcal{D}}} \bigg[ \frac{1}{2} \Big( \scriptstyle V_\psi(\mathbf{s}_t) - \mathbb{E}_{\mathbf{a}_t \sim \pi_\phi} \big[ Q_\theta(\mathbf{s}_t, \mathbf{a}_t) \\
     - \log \pi_\phi(\mathbf{a}_t \mid \mathbf{s}_t) \big] \Big)^2 \bigg],
\end{split}
\end{equation}
where $\widetilde{\mathcal{D}}$ is the replay buffer that mixed with guided trajectories. The state value network is not necessary, but it can improve the stability of action value estimation. The update of critic network follows TD loss

\begin{equation}
    J_Q(\theta)=\mathbb{E}_{\left(\mathbf{s}_t, \mathbf{a}_t\right) \sim \widetilde{\mathcal{D}}}\left[\frac{1}{2}\left(Q_\theta\left(\mathbf{s}_t, \mathbf{a}_t\right)-\hat{Q}\left(\mathbf{s}_t, \mathbf{a}_t\right)\right)^2\right],
\end{equation}
\begin{equation}
    \hat{Q}\left(\mathbf{s}_t, \mathbf{a}_t\right)=r\left(\mathbf{s}_t, \mathbf{a}_t\right)+\gamma \mathbb{E}_{\mathbf{s}_{t+1} \sim p}\left[V_{\bar{\psi}}\left(\mathbf{s}_{t+1}\right)\right],
\end{equation}
where the $\bar{\psi}$ is a target value network and defined by a soft update $\bar{\psi} = \tau\psi + (1-\tau)\bar{\psi}$.
By expanding Equation~(\ref{eq:policy_update}), the loss function of actor network is
\begin{equation}
    J_\pi(\phi)=\mathbb{E}_{\mathbf{s}_t \sim \widetilde{\mathcal{D}}, \epsilon_t \sim \mathcal{N}}\left[\scriptstyle \log \pi_\phi\left(f_\phi\left(\epsilon_t ; \mathbf{s}_t\right) \mid \mathbf{s}_t\right)-Q_\theta\left(\mathbf{s}_t, f_\phi\left(\epsilon_t ; \mathbf{s}_t\right)\right)\right].
\end{equation}

Compared to SAC, the loss function of GuidedSAC differs in that the replay buffer $\widetilde{\mathcal{D}}$ contains samples drawn from a mixed policy $ \widetilde{\pi}_\phi$. According to Theorem~\ref{theorem:conv} and Proposition~\ref{prop:single_step}, sampling from $ \widetilde{\pi}_\phi$ does not affect convergence, moreover, if $\pi_{\text{interv}}$ is superior to $\pi_\phi$, it can even accelerate convergence. The pseudocode of GuidedSAC is provided in Alg.~\ref{alg:soft_actor_critic}.

\begin{figure}[t]
    \centering
    \includegraphics[width=0.9\linewidth]{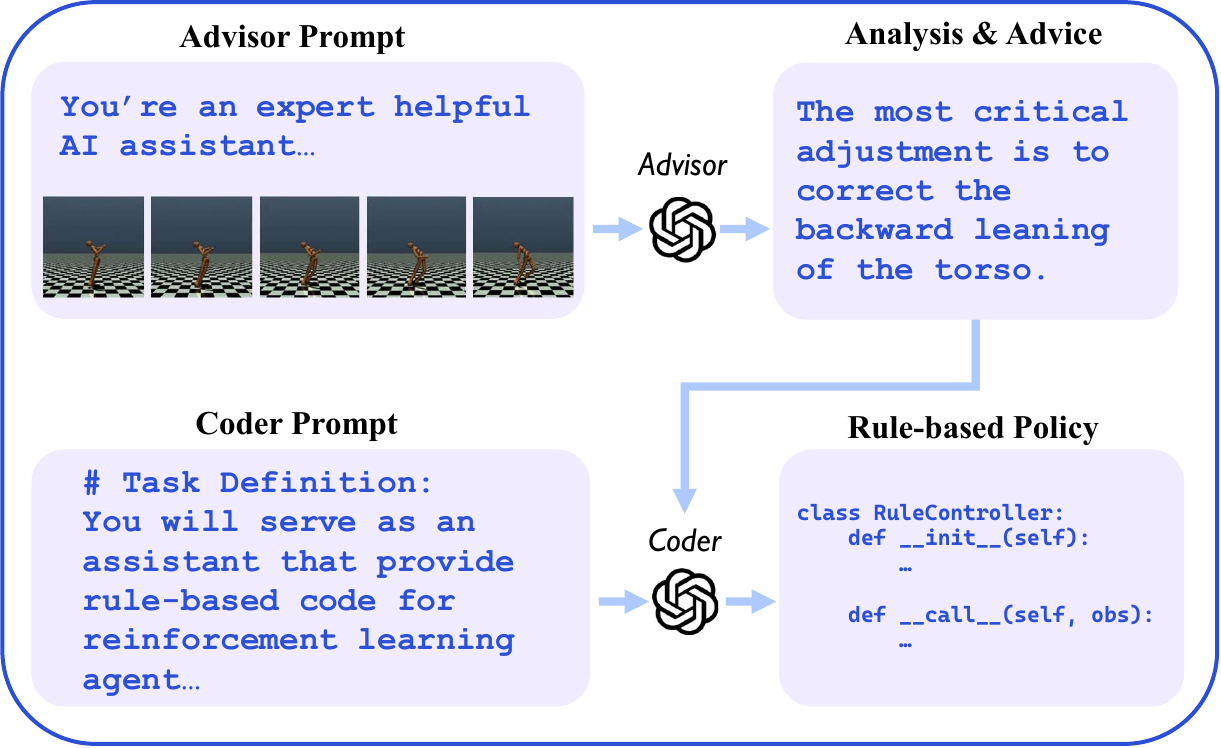}
    \caption{\textbf{Illustration of the LLM-based supervisor's cooperation details.}}
    \label{fig:llm-based-supervisor}
\end{figure}

\subsection{LLM-based Supervisor}


Recent research has shown that task decomposition, where multiple LLMs cooperate to complete a task, is more effective than relying on a single LLM to handle the entire task \citep{guo2024large}. Inspired by this, the LLM-based supervisor is designed to include two LLMs: an \textit{advisor} observes the policy replay and provides suggestions, and a \textit{coder} generates a rule-based policy based on those suggestions, as illustrated in Fig.~\ref{fig:llm-based-supervisor}. When the advisor LLM finds that the RL policy is good enough, it decides not to intervene.

GuidedSAC relies on well designed prompts to define $\pi_{\text{interv}}$ and $g(\cdot)$ effectively. We use a five component framework consisting of a task definition, background information, chain of thought reasoning, domain hints, and a code template. The task definition establishes the agent role and objectives while the background information provides environmental context and documentation. Chain of thought reasoning supports systematic analysis and domain hints improve accuracy by addressing specific LLM limitations. Finally, the code template ensures the output follows a structured format. The Advisor utilizes the first four elements whereas the Coder includes the code template to maintain implementation consistency. Contemporary LLM agent architectures often incorporate these five elements into their prompt designs \citep{wang2025mobile}. Detailed prompts are provided in Appendix \ref{sec:prompt_details}.

\section{Experiment}

We design our experiments to systematically validate GuidedSAC through a progression of increasing complexity. First, we evaluate in \textit{toy text environments} with discrete action spaces using direct policy substitution $\widetilde{\pi} = \pi_{\text{LLM}}$ rather than residual action. This simplification allows us to isolate and validate the core idea that LLM-based guidance can accelerate learning. Among baseline exploration methods, RND demonstrates the strongest overall performance on these discrete tasks. Building on this foundation, we then evaluate GuidedSAC on continuous control benchmarks \textit{MountainCar} and \textit{Humanoid} using the full intervention mechanism with residual actions, where RND serves as the primary exploration baseline due to its success in the first experiment. Finally, ablation studies analyze intervention timing and duration to understand optimal guidance policies.

The progression from discrete to continuous action spaces is designed to validate that LLM guidance provides both sample efficiency gains and the ability to generate interpretable, task-appropriate policies. By establishing strong performance in toy text environments where guidance effects can be cleanly isolated, we build confidence that the approach scales to complex, real-world relevant control problems.

\subsection{Discrete Control Tasks}
\textbf{Setup}. To evaluate the effectiveness of GuidedSAC compare it with state-of-the-art exploration methods, we first conduct experiments on four classic discrete control tasks from toy text environments.

The toy text environments include \textit{Blackjack}, a card game where the agent must learn optimal decision making under uncertainty. \textit{CliffWalking} represents a grid world navigation task requiring the agent to find the shortest path while avoiding a cliff. \textit{FrozenLake} provides a stochastic grid world environment where the agent must navigate across a frozen lake with holes. \textit{Taxi} presents a domain where the agent must pick up and deliver passengers while navigating a grid world. These environments are characterized by discrete state and action spaces, sparse rewards, making them ideal testbeds for comparing exploration strategies.

In these tasks, we employ direct policy substitution $\widetilde{\pi} = \pi_{\text{LLM}}$ instead of residual action intervention. This simplification allows us to isolate the effect of LLM-based guidance from the intervention mechanism. By doing so, we can directly validate the core hypothesis that semantic, task-aware guidance accelerates learning more effectively than undirected, novelty-based exploration.

\textbf{Baselines}. We compare GuidedSAC against three prominent intrinsic reward based exploration methods. \textit{RND (Random Network Distillation)} \citep{burda2018exploration} encourages exploration by rewarding the agent for visiting states that are novel to a randomly initialized neural network. \textit{E3B (Exploration via Elliptical Episodic Bonuses)} \citep{henaff2022exploration} uses elliptical bonuses to guide exploration in episodic settings. \textit{ICM (Intrinsic Curiosity Module)} \citep{pathak2017curiosity} leverages prediction errors as intrinsic rewards to drive exploration. These methods represent the current state of the art in exploration driven reinforcement learning and provide a comprehensive baseline for evaluating the effectiveness of LLM-based guidance.

\begin{figure}[t]
    \centering
    \begin{subfigure}{0.48\linewidth}
        \centering
        \includegraphics[width=\linewidth]{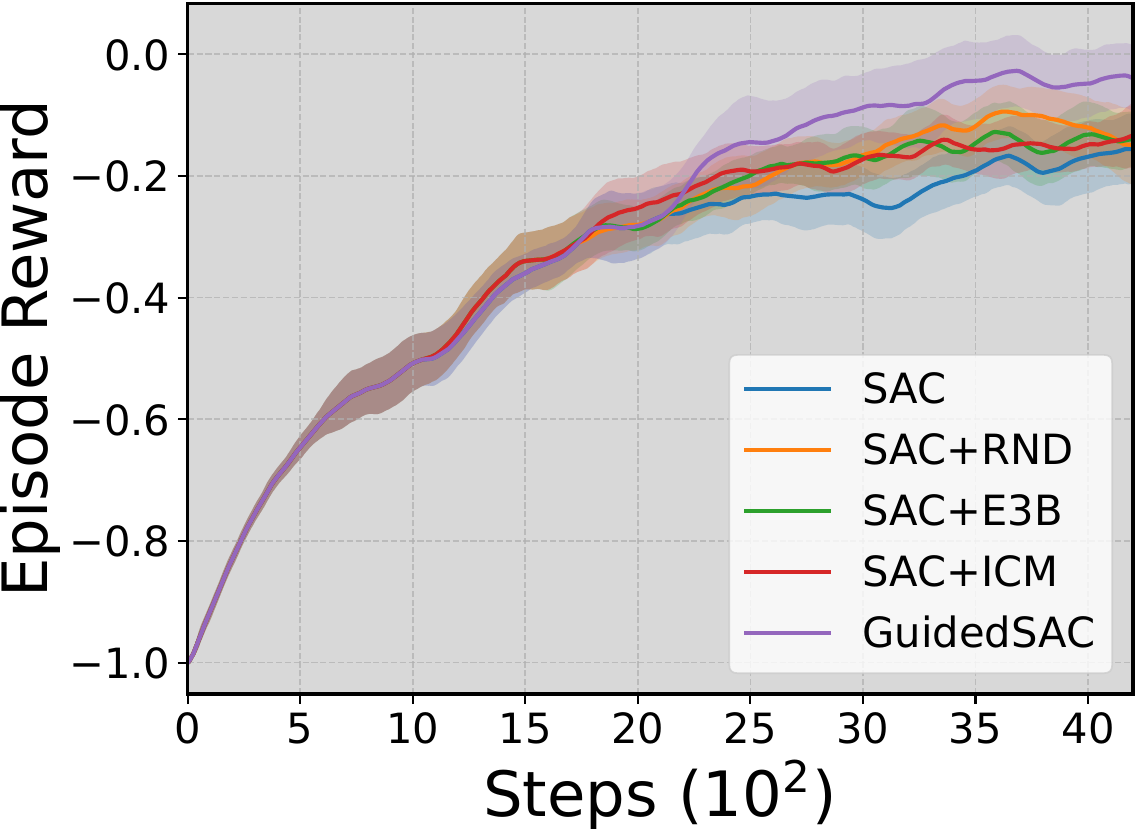}
        \caption{Blackjack}
        \label{fig:blackjack_results}
    \end{subfigure}
    \hfill
    \begin{subfigure}{0.48\linewidth}
        \centering
        \includegraphics[width=\linewidth]{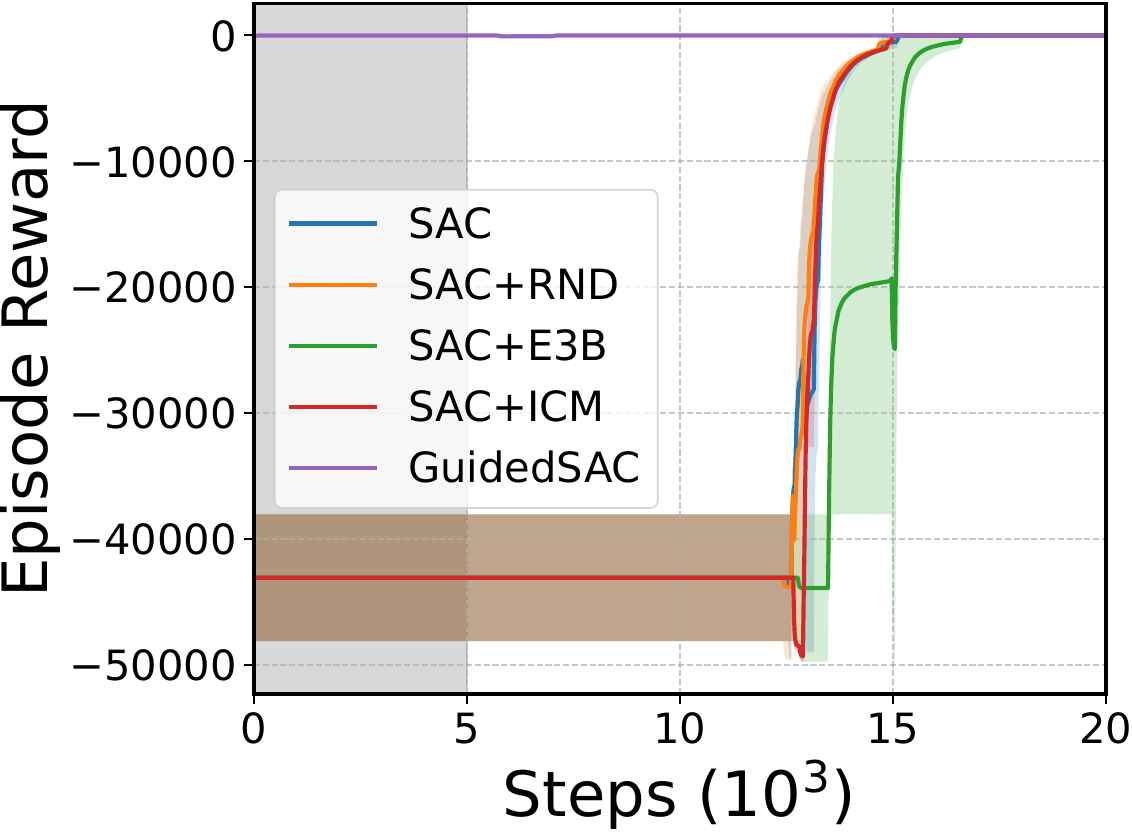}
        \caption{CliffWalking}
        \label{fig:cliffwalking_results}
    \end{subfigure}
    \vspace{0.5cm}
    \begin{subfigure}{0.48\linewidth}
        \centering
        \includegraphics[width=\linewidth]{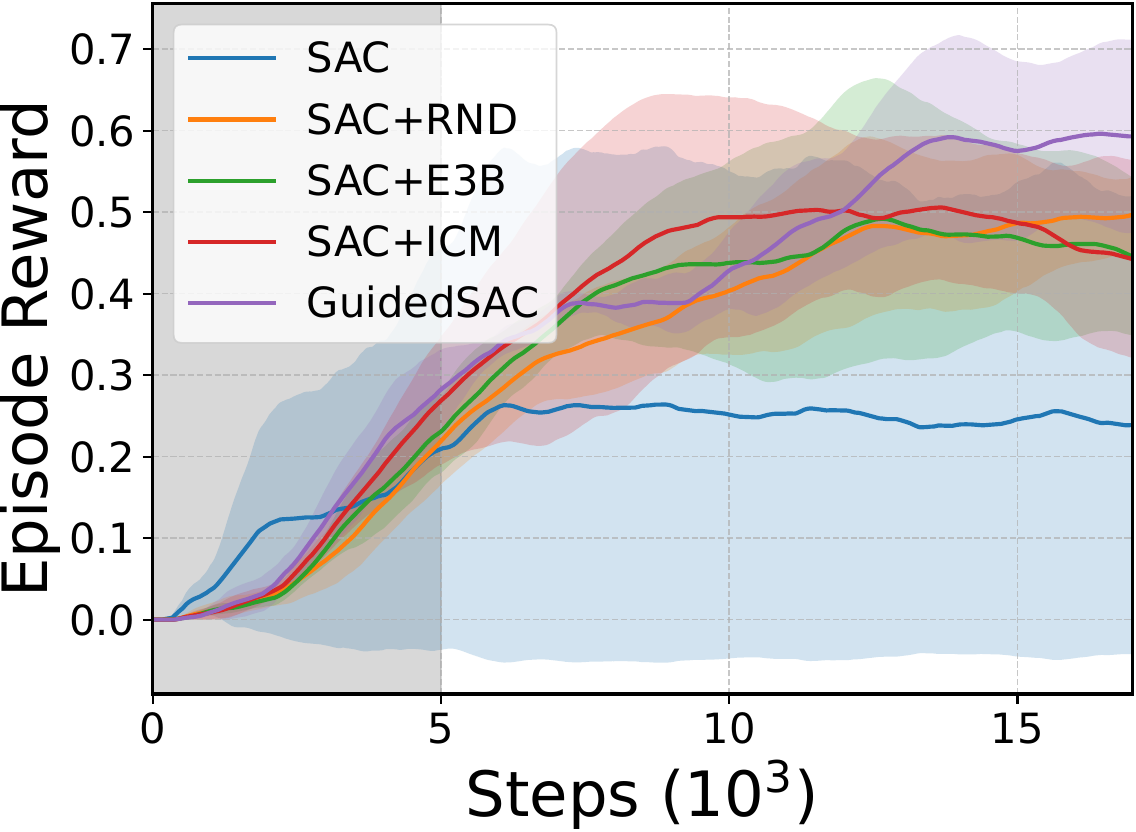}
        \caption{FrozenLake}
        \label{fig:frozenlake_results}
    \end{subfigure}
    \hfill
    \begin{subfigure}{0.48\linewidth}
        \centering
        \includegraphics[width=\linewidth]{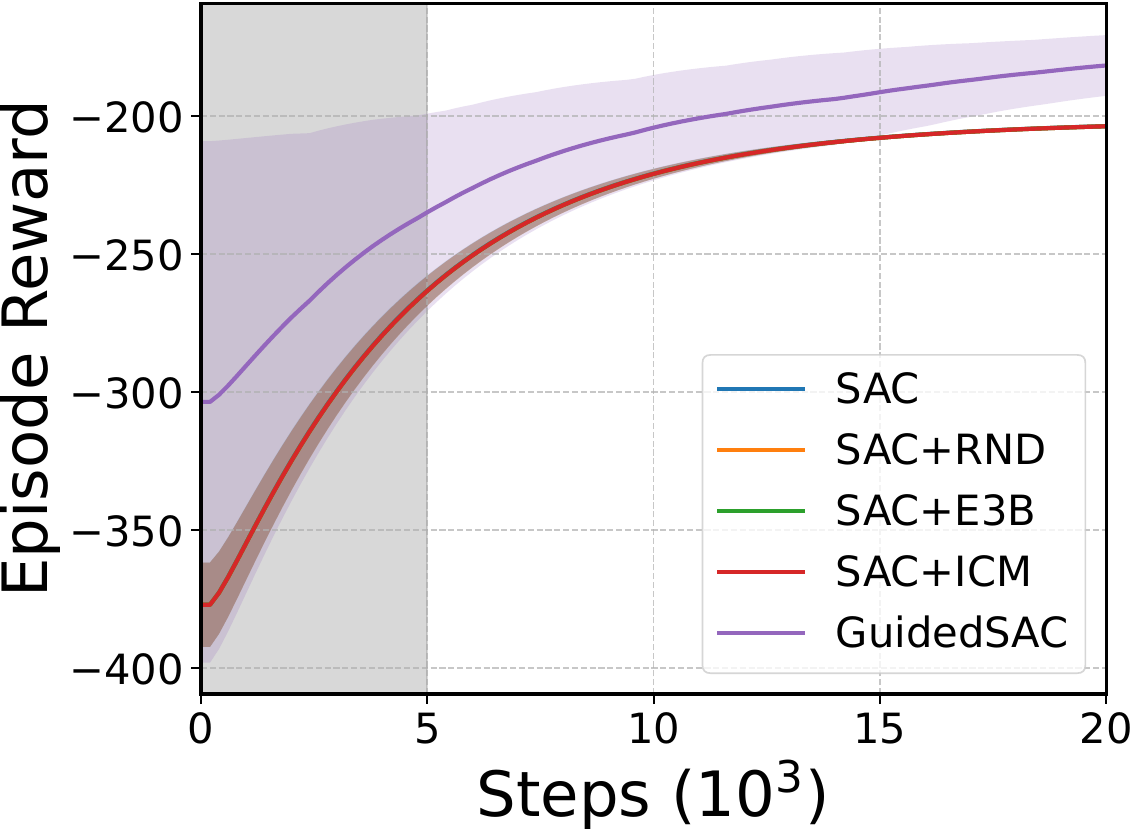}
        \caption{Taxi}
        \label{fig:taxi_results}
    \end{subfigure}
    \caption{\textbf{Performance comparison on toy text.} Training curves comparing GuidedSAC with RND, E3B, and ICM across four toy text environments. The shaded regions represent the predefined intervals where intervention can occur.}
    \label{fig:toy_text_results}
\end{figure}

\textbf{Efficiency and Convergence Analysis.} In Fig.~\ref{fig:toy_text_results}, the shaded regions represent the predefined intervals where intervention can occur. Within these intervals, the timing of intervention is autonomously determined by the Advisor. As shown, GuidedSAC achieves the highest reward across all four tasks. Notably, intrinsic reward methods do not necessarily accelerate convergence in all tasks. For instance, in the CliffWalking task, the E3B method actually slows down convergence. This may be attributed to the task's relatively small exploration space and a single optimal trajectory, where random sampling before policy updates is sufficient to discover this trajectory, making intrinsic rewards introduce unnecessary exploration overhead. For CliffWalking and Taxi, GuidedSAC generates the optimal rule-based policy at the beginning, enabling near-immediate convergence to the optimal policy in these environments.

Among the baselines, RND emerges as the strongest competitor and demonstrates robust performance in structured environments like \textit{Blackjack}. However, the fundamental advantage of GuidedSAC lies in its value-aligned exploration. While intrinsic methods like RND and ICM drive the agent toward any novel state, GuidedSAC leverages the semantic understanding of the LLM to direct exploration toward trajectories that are both novel and task-relevant. In \textit{FrozenLake}, for example, novelty-driven exploration might lead to discovering new ways to fall into holes, whereas GuidedSAC focuses on reaching the goal. These results empirically support our theoretical analysis in Proposition~\ref{prop:single_step} by proving that even non-optimal LLM guidance significantly improves sample efficiency by outperforming the local random policy.

\subsection{Continuous Control Tasks}

\textbf{Setup}. We evaluate GuidedSAC on two distinct continuous control problems: MountainCar and Humanoid. The MountainCar environment requires an underpowered vehicle to ascend a steep hill by oscillating back and forth to generate momentum. Its state space is a two-dimensional continuous space representing position and velocity, while the agent exerts acceleration within the range of -1.0 to 1.0. The reward for MountainCar includes a hill reward  for reaching the goal and a control cost . In contrast, Humanoid is a high-dimensional task in MuJoCo, where a 17-joint robot must learn to walk forward. The 47-dimensional state space includes the robot's center of mass position, velocity, angular momentum, and joint angles. The reward function for Humanoid consists of a forward reward , a healthy reward , and a control cost . For both tasks, we employ residual action intervention  to enable fine-grained control. This allows the RL policy to retain autonomy while receiving targeted guidance from the LLM, which is particularly crucial in Humanoid to overcome the challenge of coordinating 17 continuous control variables.

\begin{figure}[tb]
    \centering
    \begin{subfigure}{0.48\linewidth}
        \centering
        \includegraphics[width=\linewidth]{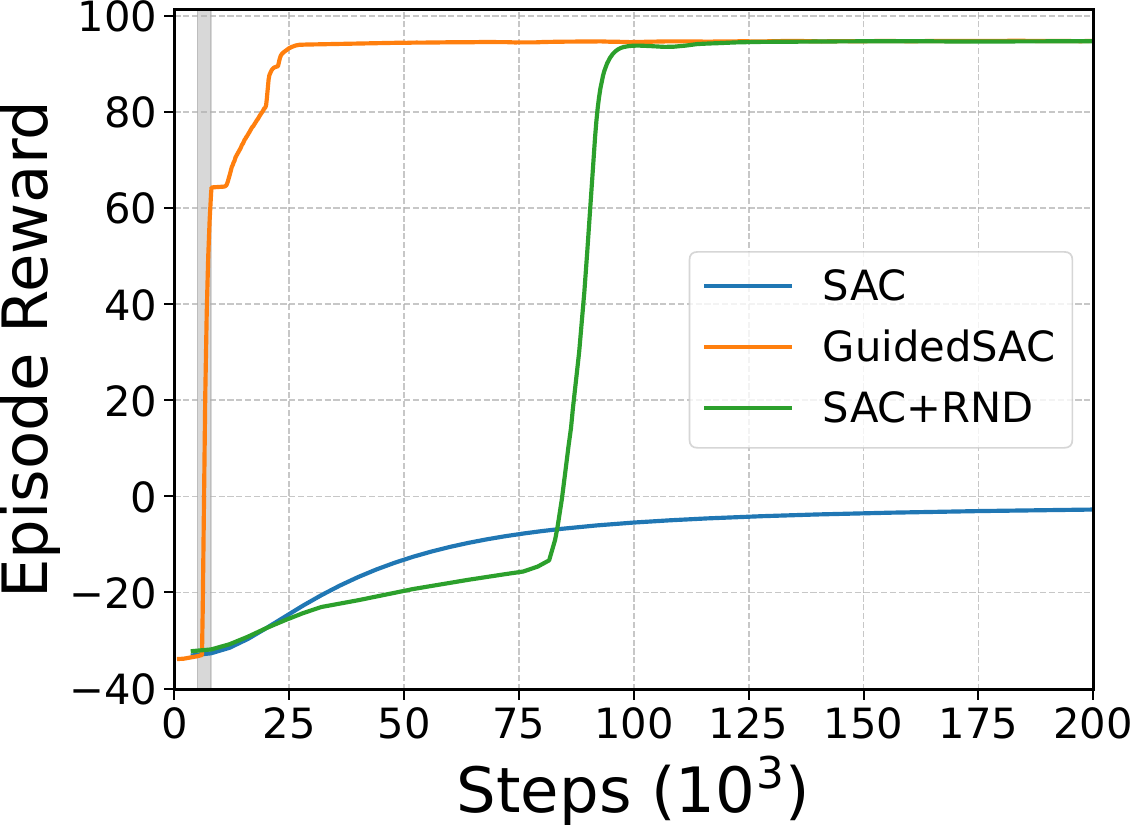}
        \caption{MountainCar}
        \label{fig:exp_mountaincar_main}
    \end{subfigure}
    \begin{subfigure}{0.48\linewidth}
        \centering
        \includegraphics[width=\linewidth]{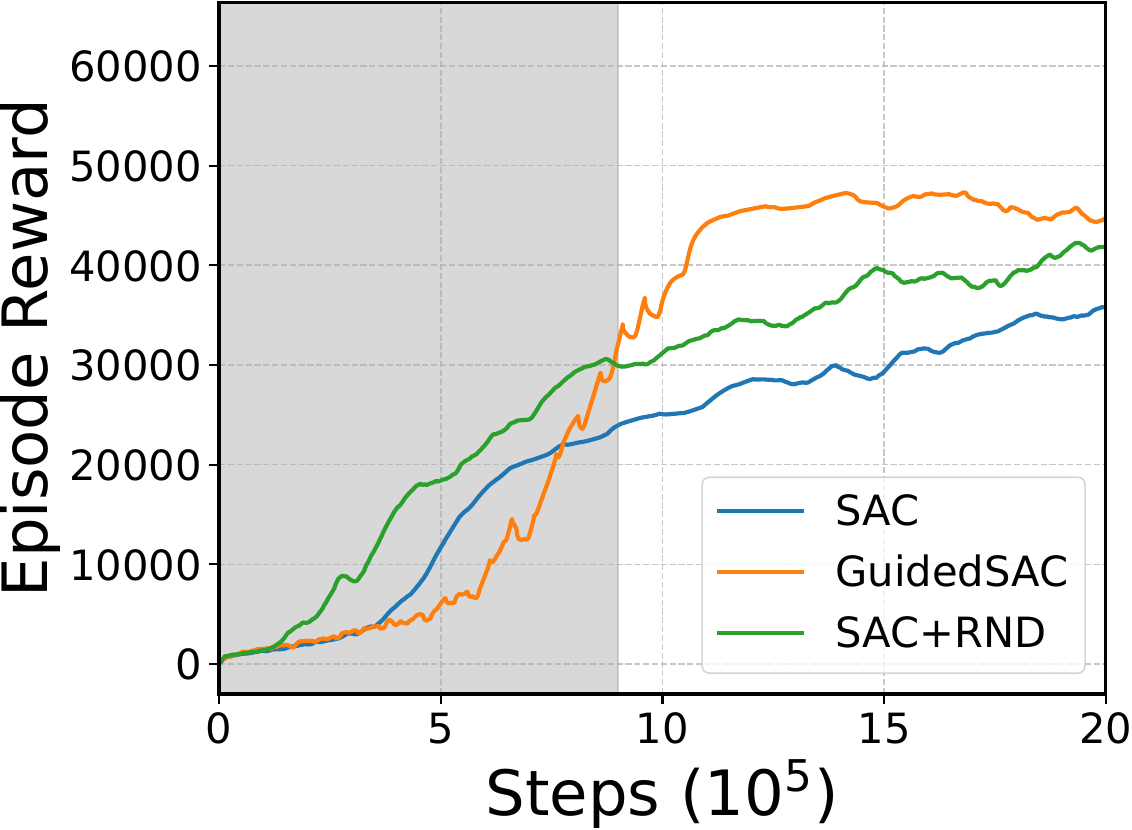}
        \caption{Humanoid}
        \label{fig:exp_humanoid_main}
    \end{subfigure}%
    \caption{\textbf{Training curves for MountainCar and Humanoid.} Shaded regions indicate intervention periods for GuidedSAC. For MountainCar the intervention occurs between steps 50k and 53k while for Humanoid it occurs between steps 700k and 800k. MountainCar reward of 100 indicates successful goal achievement. Humanoid values above 5000 represent robust bipedal locomotion.}
\end{figure}

\textbf{Baselines}. Based on the discrete control experiments where RND demonstrated the strongest performance among intrinsic reward methods, we employ both SAC and SAC+RND as baselines to evaluate the effectiveness of LLM-based guidance in continuous control settings.

\textbf{Results.} Training curves for continuous control are shown in Fig.~\ref{fig:exp_mountaincar_main} and Fig.~\ref{fig:exp_humanoid_main}. In MountainCar, SAC fails to climb the slope and remains stationary to minimize control costs. While SAC + RND eventually discovers a successful trajectory after 80k steps, GuidedSAC achieves high rewards immediately after the first intervention. This demonstrates that targeted guidance allows the agent to find high value trajectories without the need for extensive random exploration. The Humanoid task presents a greater challenge due to its high dimensional state and action spaces. During early training, the agent struggles to balance even with guidance, which results in a slower reward increase compared to SAC. However, once the policy achieves basic stability midway through training, the action level intervention becomes highly effective. GuidedSAC shows a sharp increase in reward after $7 \times 10^5$ steps and eventually surpasses all baselines. These results support our single step improvement proposition and show that LLM guidance is most effective when the baseline policy reaches a sufficient level of basic competence.

\begin{figure*}[thbp]
    \centering
    \includegraphics[width=\linewidth]{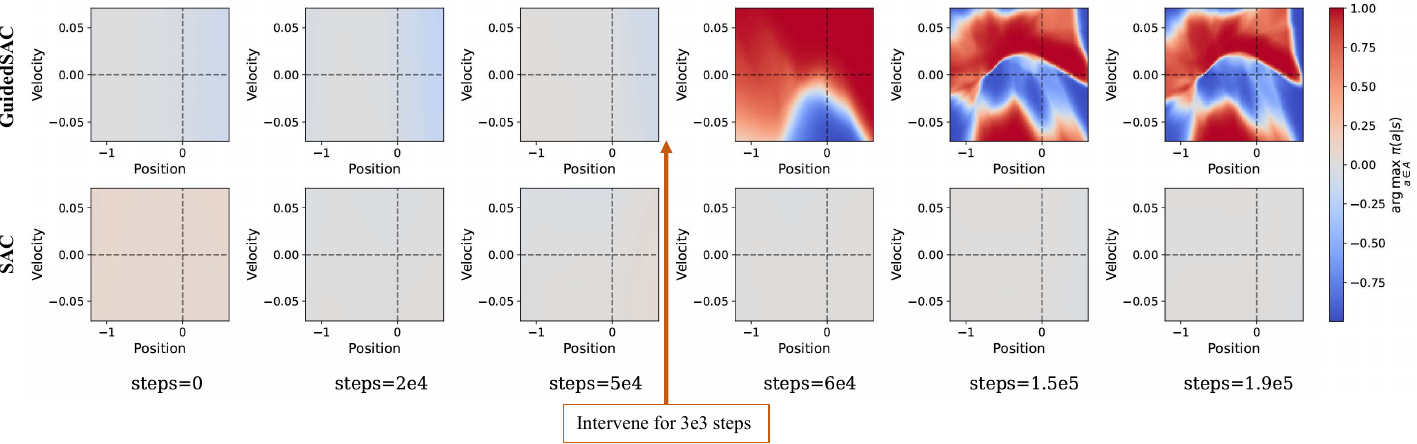}
    \caption{\textbf{Policy landscape evolution in MountainCar.} Columns show snapshots at 0k, 50k intervention start, 53k intervention end, 60k, 80k, and 100k training steps. Horizontal axis shows car position with $x \in [-1.3, 0.7]$. Vertical axis shows velocity with $v \in [-0.07, 0.07]$. Color indicates most probable action $\arg\max_a \pi(a|s)$. Dotted lines intersect at initial state with $x=-0.5$ and $v=0$. The intervention window causes immediate policy reconfiguration, demonstrating efficient knowledge transfer from $\pi_{\text{interv}}$ to $\pi_\phi$.}
    \label{fig:mountaincar_policy}
\end{figure*}

\begin{figure}[thbp]
    \centering
    \begin{subfigure}{\linewidth}
        \centering
        \includegraphics[width=\linewidth]{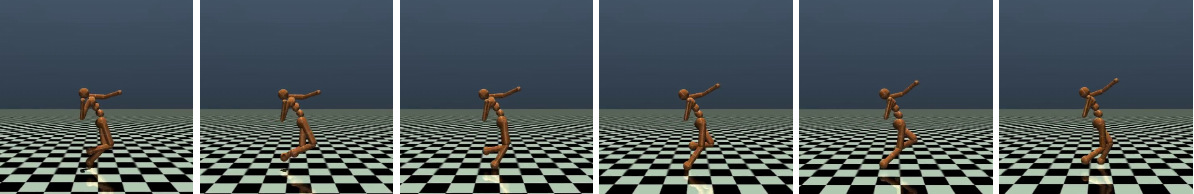}
        \caption{GuidedSAC}
    \end{subfigure}
    \hfill
    \begin{subfigure}{\linewidth}
        \centering
        \includegraphics[width=\linewidth]{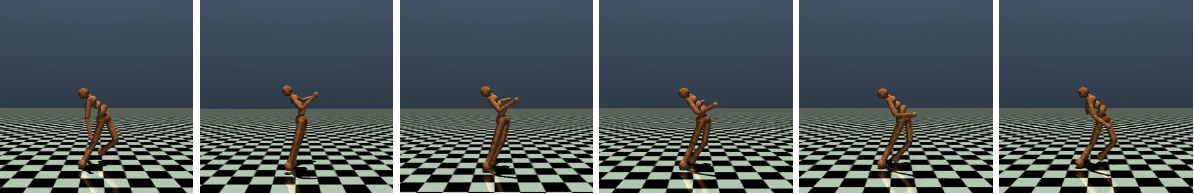}
        \caption{SAC}
    \end{subfigure}%
    \caption{\textbf{Visualization of final policies in Humanoid task.} Left: GuidedSAC produces coordinated bipedal running with sinusoidal leg motion patterns. Right: SAC converges to an unnatural gait—supporting on one leg while performing rapid small-step hops. Both achieve reward, but GuidedSAC's locomotion aligns more closely with natural human movement.}
    \label{fig:humanoid_video}
\end{figure}

\textbf{Guidance Accelerates Discovery of High-Value Trajectories} The success across both tasks highlights how LLM guidance offers targeted intervention compared to the systematic but undirected exploration of intrinsic rewards. In MountainCar, the policy landscape visualization in Fig.~\ref{fig:mountaincar_policy} shows that the policy undergoes immediate reconfiguration during the intervention window. The intervention policy $\pi_{\text{interv}}$ sets a simple oscillation strategy that is quickly retained by $\pi_\phi$ after the intervention ends. This validates Lemma~\ref{lem:guided_policy_evaluation} as data collected under the mixed policy $\widetilde{\pi}$ effectively transfers knowledge to the agent. In Humanoid, this improvement is more delayed but equally dramatic. This phenomenon reflects the interplay between task complexity and algorithmic design. During early training, the robot struggles with basic balance, meaning even with action-level intervention, the combined policy may not immediately find high-value trajectories. However, once the base policy $\pi_\phi$ achieves basic stability, the residual guidance (such as sinusoidal joint control) enhances the existing policy rather than overriding it, leading to rapid convergence.

\textbf{Qualitative Analysis of Policy Reconfiguration}. The effectiveness of GuidedSAC is further evidenced by the quality of the learned behaviors. In MountainCar, the reward curve of SAC suggests it converges to a stationary policy to minimize control costs, while GuidedSAC discovers the non-intuitive oscillation strategy. In Humanoid, as shown in Fig.~\ref{fig:humanoid_video}, the qualitative difference is even more pronounced. Standard SAC often converges to an unnatural gait where the robot supports itself on one leg while performing rapid small-step hops. Although this maximizes reward, it is biomechanically inefficient. In contrast, GuidedSAC generates more structured and interpretable locomotion. By leveraging simple rule-based guidance for sinusoidal leg motion, the agent learns coordinated bipedal running that aligns closely with natural human movement. This demonstrates that LLM-based guidance can steer the learning process toward semantically meaningful and task-appropriate policies rather than just reward-maximizing ones.

\begin{figure}[tbp]
    \centering
    \begin{subfigure}{0.47\linewidth}
        \centering
        \includegraphics[width=\linewidth]{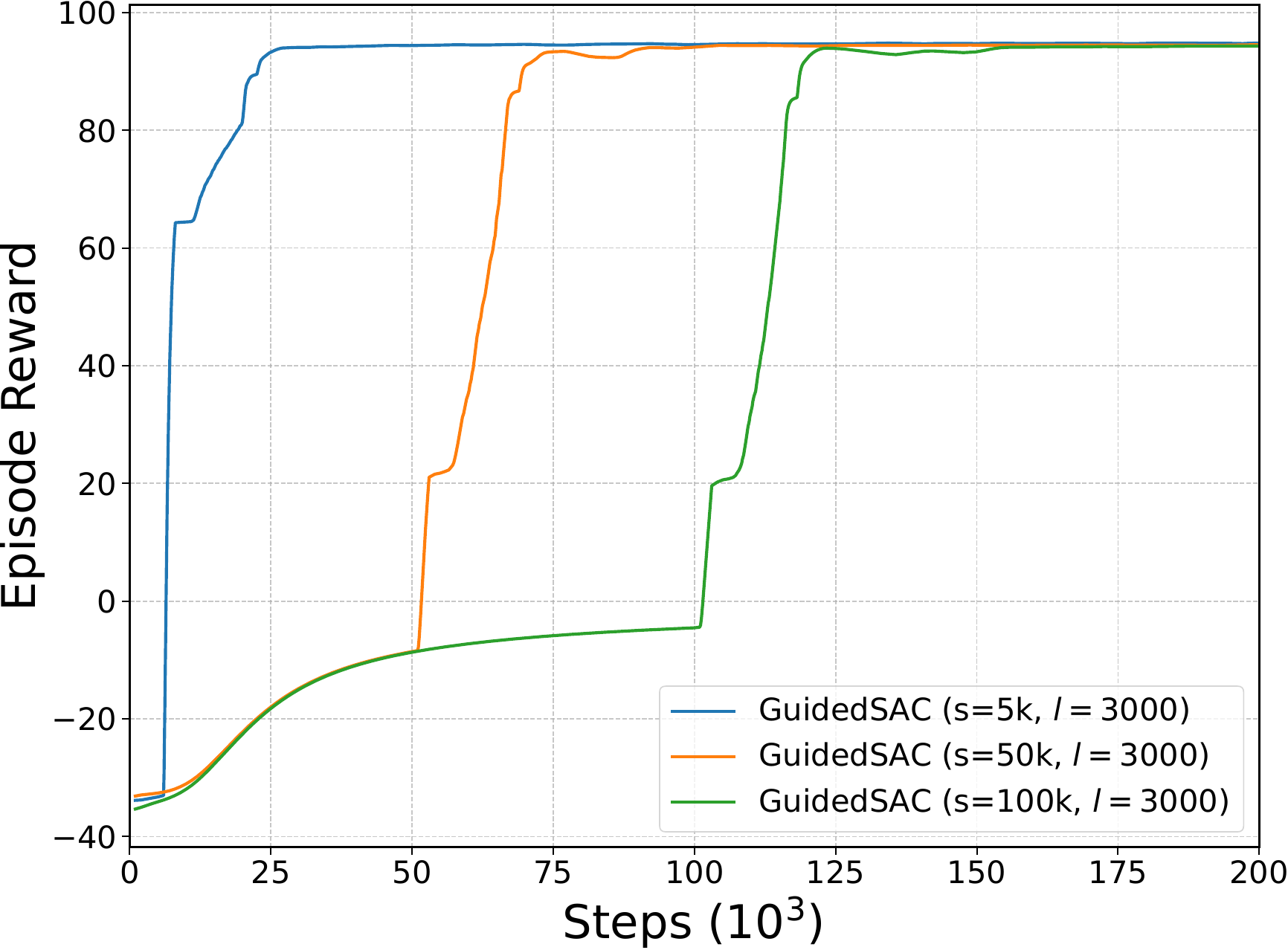}
        \caption{Ablation on intervention start point}
        \label{fig:ab_start}
    \end{subfigure}
    \hfill
    \begin{subfigure}{0.47\linewidth}
        \centering
        \includegraphics[width=\linewidth]{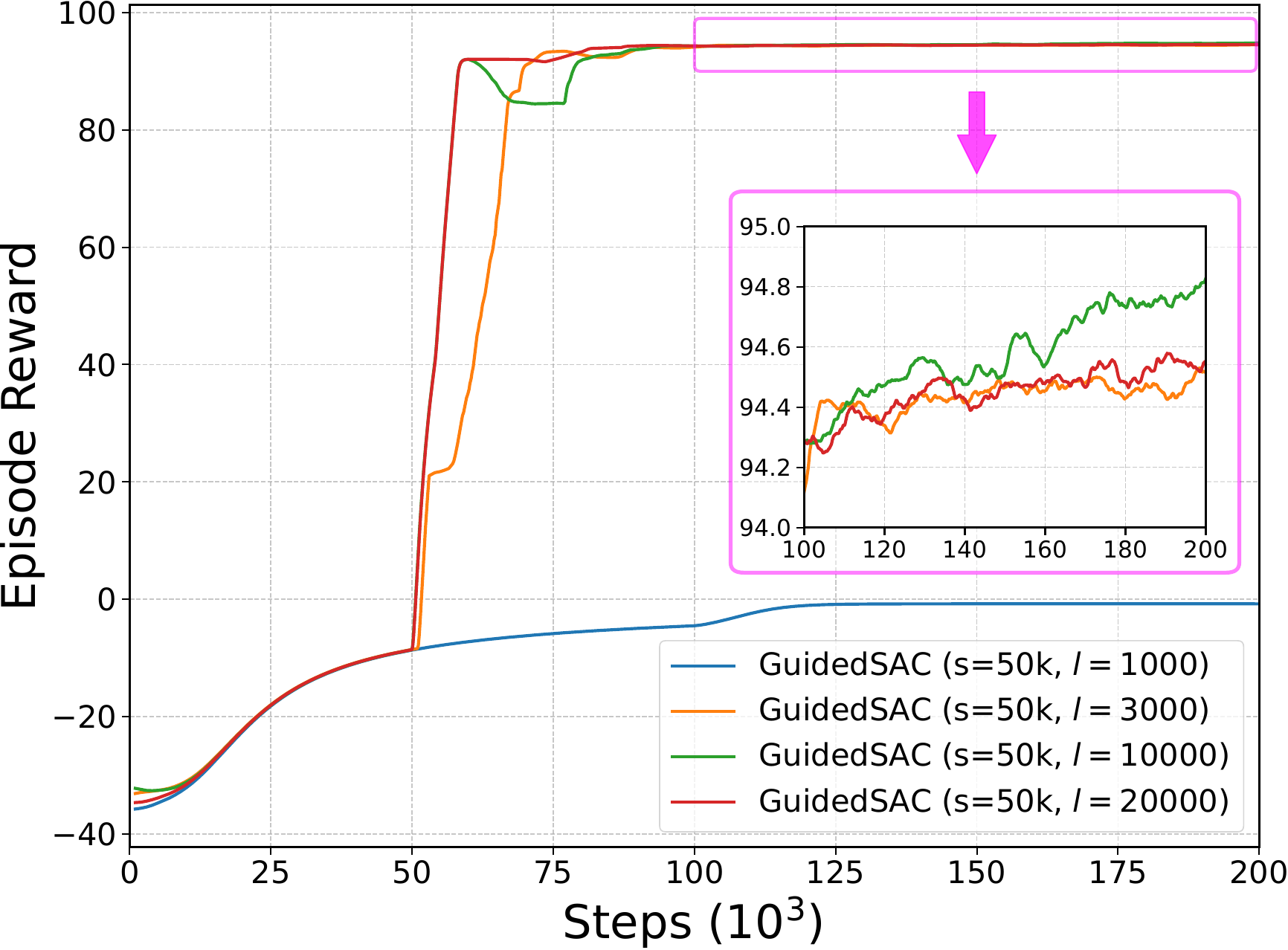}
        \caption{Ablation on intervention duration}
        \label{fig:ab_duration}
    \end{subfigure}%
    \caption{\textbf{Impact of intervention timing and duration.} Left side shows intervention start point $s$ tested at 5k, 10k, 20k, and 40k steps. Right side shows intervention duration $l$ tested at 1k, 3k, 5k, and 10k steps. Optimal performance requires early intervention with moderate duration. If duration is too short it dilutes impact, while too long hinders autonomous exploration.}
    \label{fig:ablation}
\end{figure}

\subsection{Ablation Study}

According to the single-step improvement proposition, action-level intervention in GuidedSAC should not persist throughout the entire training phase. To systematically investigate the influence of intervention timing and duration on agent performance, we conduct an ablation study using the MountainCar environment. In these experiments, we disable the autonomous judgment of the Advisor LLM and instead manually configure the intervention start point $s$ and the intervention duration $l$. These two factors collectively determine the degree of external guidance provided to the agent. For example, a configuration where $s=5000$ and $l=3000$ indicates that the intervention begins precisely at step 5000 and continues for a fixed window of 3000 steps. This manual override allows us to isolate how different temporal interventions affect the convergence and stability of the baseline policy.

\textbf{Effect of Intervention Timing}. The results of the ablation study on $s$ (Fig.~\ref{fig:ab_start}) indicate that, if the intervention policy is sufficiently effective, earlier intervention yields better results. This aligns with the single-step-improvement proposition, which suggests that timely intervention can accelerate policy improvement. 

\textbf{Effect of Intervention duration}. The ablation study on $l$ (Fig.~\ref{fig:ab_duration}) reveals that the intervention duration should be neither too short nor too long. If the duration is too short, the intervened data in $\widetilde{D}$ becomes diluted by non-intervened data, thereby diminishing its impact on policy improvement. On the other hand, if the duration is too long, after the RL policy is good enough, the intervention will hinder the RL policy from exploring better trajectories.

\section{Related Work}
\textbf{RL for Continuous Control}. Exploration in continuous control problems poses significant challenges due to the vast state-action space. Take humanoid bipedal locomotion in MuJoCo as an example. The policies produced by RL often remain suboptimal, even with advanced exploration techniques, leading to unnatural forward movement \citep{lillicrap2015ddpg, fujimoto2018td3, schulman2017ppo, haarnoja2018sac}.

To enhance policy quality, researchers have explored reference-based and reference-free paradigms leveraging external data or prior knowledge. Reference-based methods, such as DeepMimic \citep{peng2018deepmimic}, use motion capture to enable realistic locomotion, while \citep{galljamov2022improving} demonstrate that action space representation, symmetry priors, and cliprange scheduling accelerate training and improve human-like walking. Reference-free methods, relying on carefully designed reward functions, capture effective bipedal movement characteristics but often require extensive tuning or human demonstrations \citep{siekmann2021sim, van2024revisiting}.

Combining exploration techniques with prior knowledge may offer an efficient alternative. GuidedSAC introduces an LLM-based supervisor that provides action-level guidance, leveraging LLMs' prior knowledge without extensive reward tuning or human demonstrations. Its adaptability through prompt and template modifications enables integration of domain-specific knowledge, making it a promising approach for complex environments.

\textbf{Advice-taking Agents}. Advice-taking RL agents enhance learning through external guidance, such as demonstrations or language instructions. For demonstration-based advice-taking, \citep{maclin1994advice} introduced a framework that utilizes pre-defined code to guide the learning process. Wu et al.
 \citep{wu2023human} proposed a human-in-the-loop framework for autonomous navigation, allowing human operators to intervene during training. Similarly, \citep{peng2024learning} developed a method to learn a proxy value function from human interventions, which is subsequently used to guide Q-network updates. \citep{cederborg2015policy} extended Q-learning by transforming human demonstrations into weights within a Boltzmann policy. Wang et al. \citep{wang2018intervention} incorporated an imitation loss into PPO to enhance policy updates. For language-based advice-taking, recent advancements have demonstrated the potential of LLMs in shaping RL exploration. Du et al. \citep{du2023guiding} introduced ELLM, a method that rewards agents for achieving sub-goals suggested by LLM, leading to improved performance in various tasks. Ma et al. \citep{ma2024explorllm} proposed ExploRLLM, which integrates LLMs for candidate selection in manipulation tasks, enhancing sample efficiency and policy learning in robotic manipulation. Chen et al. \citep{10529514} integrates an LLM-generated rule-based controller with RL, leveraging the collected data to create an imitation learning loss that guides policy updates.

However, most existing methods focus on discrete action spaces, while continuous control approaches often require human intervention, which is impractical in parallelized or high-acceleration environments and infeasible for complex tasks like bipedal locomotion. Besides, current LLM-aided RL algorithms often lack rigorous theoretical analysis. GuidedSAC bridges this gap by theoretically and empirically validating RL with suboptimal action-level guidance, offering a scalable and efficient framework for continuous control advice-taking agents.

\section{Conclusion}


This paper introduces GuidedSAC, which employs an LLM-based supervisor for action-level guidance to facilitate targeted exploration in complex continuous control tasks. We demonstrate, theoretically and empirically, that integrating this LLM-driven guidance into the SAC algorithm preserves convergence properties while enhancing convergence speed and overall performance. Our theoretical analysis identifies conditions under which external guidance is most beneficial to RL. Experimental results on discrete toy text tasks and continuous control benchmarks show that GuidedSAC enables efficient learning and promotes more reasonable policies. GuidedSAC presents a promising direction for developing exploration strategies in continuous control that intelligently explore high-value states or trajectories.


\section*{Impact Statement}
This paper presents work whose goal is to advance the field of Machine Learning. There are many potential societal consequences of our work, none which we feel must be specifically highlighted here.



\bibliography{guidedsac}
\bibliographystyle{plainnat}

\onecolumn
\appendix

\section{Proof of Convergence}
\label{appendix:proof}
\begin{lem}[Guided Policy Evaluation] \label{lem:guided_policy_evaluation}
    Consider the guided Bellman backup operator $\mathcal{T}^{\widetilde{\pi}}$ and a mapping $Q:\mathcal{S}\times\mathcal{A}\rightarrow\mathbb{R}$, 
    and define $Q^{k+1}=\mathcal{T}^{\widetilde{\pi}} Q^k$. Then the sequence $Q^k$ will converge to the Q-value of $\widetilde{\pi}$ as $k\rightarrow\infty$.
\end{lem}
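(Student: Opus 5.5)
We follow the standard soft policy evaluation argument of SAC, with the mixed behavior policy $\widetilde{\pi}(\cdot|s)=g(s)\pi_{\text{interv}}(\cdot|s)+(1-g(s))\pi_\phi(\cdot|s)$ in place of the learned policy. First we fix the guided backup operator:
\begin{equation*}
\mathcal{T}^{\widetilde{\pi}}Q(s_t,a_t)\triangleq r(s_t,a_t)+\gamma\,\mathbb{E}_{s_{t+1}\sim p}\big[V(s_{t+1})\big],\qquad V(s)=\mathbb{E}_{a\sim\widetilde{\pi}(\cdot|s)}\big[Q(s,a)-\log\widetilde{\pi}(a|s)\big].
\end{equation*}
Because $g(s)\in\{0,1\}$, for each state $\widetilde{\pi}(\cdot|s)$ is exactly one of $\pi_{\text{interv}}(\cdot|s)$ or $\pi_\phi(\cdot|s)$. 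So $\widetilde{\pi}$ is a bona fide stationary Markov policy, provided $g$ is treated as a fixed function of the state. This is the modelling step that matters: the advisor's decision $I(s_{\leq t})$ depends on a history window, and we will simply adopt the generalization $g(s)$ already made in the text.

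Next we absorb the entropy term into the reward. We define $r_{\widetilde{\pi}}(s_t,a_t)\triangleq r(s_t,a_t)+\gamma\,\mathbb{E}_{s_{t+1}}\big[\mathcal{H}(\widetilde{\pi}(\cdot|s_{t+1}))\big]$, so that $\mathcal{T}^{\widetilde{\pi}}Q(s_t,a_t)=r_{\widetilde{\pi}}(s_t,a_t)+\gamma\,\mathbb{E}_{s_{t+1},a_{t+1}\sim\widetilde{\pi}}[Q(s_{t+1},a_{t+1})]$. We then assume $|\mathcal{A}|<\infty$, or more generally that the entropies of $\pi_{\text{interv}}$ and $\pi_\phi$ are uniformly bounded. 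Since $\widetilde{\pi}$ pointwise equals one of the two, its entropy is bounded by the larger of the two bounds, and hence $r_{\widetilde{\pi}}$ is bounded.

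With bounded rewards, the operator is a $\gamma$-contraction in the sup norm. For any two maps $Q_1,Q_2$,
\begin{equation*}
\|\mathcal{T}^{\widetilde{\pi}}Q_1-\mathcal{T}^{\widetilde{\pi}}Q_2\|_\infty=\gamma\sup_{s,a}\big|\mathbb{E}_{s',a'\sim\widetilde{\pi}}[Q_1-Q_2](s',a')\big|\le\gamma\|Q_1-Q_2\|_\infty .
\end{equation*}
We take $\gamma<1$; the preliminaries allow $\gamma\in[0,1]$, so this is an added assumption. Banach's fixed-point theorem then gives convergence of $Q^k=\mathcal{T}^{\widetilde{\pi}}Q^{k-1}$ to a unique fixed point. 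That fixed point satisfies the soft Bellman equation for $\widetilde{\pi}$, so it equals the soft $Q$-value $Q^{\widetilde{\pi}}$. Equivalently, we could invoke standard policy-evaluation convergence \citep{haarnoja2018sac} applied to $\widetilde{\pi}$ with reward $r_{\widetilde{\pi}}$.

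The main obstacle is justifying the boundedness of the entropy-augmented reward and the Markov property of $\widetilde{\pi}$. The residual-action policy $\pi_{\text{interv}}$, the law of $a_t+\Delta a_t$, may be nearly deterministic, as with a rule-based coder policy, so its differential entropy could be unbounded below. We would handle this by restricting attention to the finite-action or bounded-entropy setting, as in SAC's own lemma, and by stating explicitly that the history-dependent intervention is idealized as the state-dependent $g$.
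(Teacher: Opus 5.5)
Your proof is correct and takes the same route as the paper: both show that $\mathcal{T}^{\widetilde{\pi}}$ is a $\gamma$-contraction in $\|\cdot\|_\infty$, and convergence to $Q^{\widetilde{\pi}}$ then follows from the fixed-point theorem, a step the paper leaves implicit. The differences are only in care. You use the soft backup and absorb the entropy into $r_{\widetilde{\pi}}$, whereas the paper's displayed backup omits the entropy term even though the improvement lemma uses soft values. You also state explicitly the assumptions the paper's argument silently needs: $\gamma<1$, bounded entropy-augmented rewards so that $\mathcal{T}^{\widetilde{\pi}}$ acts on bounded functions, and the idealization of the history-dependent intervention as a fixed state-dependent $g$.
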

\begin{proof}
    We expand the guided Bellman backup operator $\mathcal{T}^{\widetilde{\pi}}$ as
    \begin{equation} \label{eq:policy_evaluation}
        Q({s}, {a}) \leftarrow r\left({s}, {a}\right) + \gamma \mathbb{E}_{{s'} \sim p, {a'} \sim {\widetilde{\pi}},}\left[Q\left({s'}, {a'}\right)\right].
    \end{equation}
    Then, we can prove guided Bellman backup operator $\mathcal{T}^{\widetilde{\pi}}$ is a contraction mapping.
    \begin{equation}
        \begin{aligned}
        &\Vert \mathcal{T}^{\widetilde{\pi}} Q_{k} - \mathcal{T}^{\widetilde{\pi}} Q_{k+1} \Vert_\infty = \mathop{\max}_{s,a} \vert \mathcal{T}^{\widetilde{\pi}} Q_{k}(s,a) - \mathcal{T}^{\widetilde{\pi}} Q_{k+1}(s,a) \vert \\
        & \leq \gamma \mathop{\max}_{s,a} \lvert \mathbb{E}_{s',a'} \left[Q_{k}(s',a') - Q_{k+1}(s',a')\right] \rvert \\
        & \leq \gamma \mathop{\max}_{s,a} \vert Q_{k}(s',a') - Q_{k+1}(s',a') \vert \\
        & = \gamma \Vert Q_{k} - Q_{k+1} \Vert_\infty
        \end{aligned}
    \end{equation}
\end{proof}

\begin{lem}[Guided Policy Improvement]
    Let $\widetilde{\pi}_{\mathrm{old}} \in \Pi$ and update policy following Equation~(\ref{eq:policy_update_app}) to get $\pi_{\mathrm{new}}$. 
    Then $V^{\pi_{\mathrm{new}}}\left(s_t\right) \geq V^{\widetilde{\pi}_{\mathrm{old}}}\left(s_t\right)$ for all $s_t\in \mathcal{S}$.
\end{lem}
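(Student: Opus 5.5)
The plan is to adapt the standard soft policy improvement argument of SAC (Haarnoja et al.), using the soft Q-function of the \emph{mixed} behavior policy $\widetilde{\pi}_{\mathrm{old}}$ as the reference. By Lemma~\ref{lem:guided_policy_evaluation}, iterating $\mathcal{T}^{\widetilde{\pi}_{\mathrm{old}}}$ yields the fixed point $Q^{\widetilde{\pi}_{\mathrm{old}}}$, and we use its soft Bellman relation
\begin{equation*}
Q^{\widetilde{\pi}_{\mathrm{old}}}(s_t,a_t)=r(s_t,a_t)+\gamma\,\mathbb{E}_{s_{t+1}\sim p}\big[V^{\widetilde{\pi}_{\mathrm{old}}}(s_{t+1})\big],\quad V^{\widetilde{\pi}_{\mathrm{old}}}(s)=\mathbb{E}_{a\sim\widetilde{\pi}_{\mathrm{old}}}\big[Q^{\widetilde{\pi}_{\mathrm{old}}}(s,a)-\log\widetilde{\pi}_{\mathrm{old}}(a\mid s)\big].
\end{equation*}
We read the target in Equation~(\ref{eq:policy_update_app}) as the normalized Boltzmann distribution $\exp Q^{\widetilde{\pi}_{\mathrm{old}}}(s_t,\cdot)/Z^{\widetilde{\pi}_{\mathrm{old}}}(s_t)$, with the temperature absorbed into the rewards.

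The first step is a one-step inequality. Since $\widetilde{\pi}_{\mathrm{old}}\in\Pi$ by hypothesis, it is a feasible candidate in the $\arg\min$, so $J_{\widetilde{\pi}_{\mathrm{old}}}(\pi_{\mathrm{new}}(\cdot\mid s_t))\le J_{\widetilde{\pi}_{\mathrm{old}}}(\widetilde{\pi}_{\mathrm{old}}(\cdot\mid s_t))$. Expanding both KL divergences, the partition function $\log Z^{\widetilde{\pi}_{\mathrm{old}}}(s_t)$ is identical on both sides and cancels. This gives
\begin{equation*}
\mathbb{E}_{a\sim\pi_{\mathrm{new}}}\big[Q^{\widetilde{\pi}_{\mathrm{old}}}(s_t,a)-\log\pi_{\mathrm{new}}(a\mid s_t)\big]\ \ge\ V^{\widetilde{\pi}_{\mathrm{old}}}(s_t)\quad\forall s_t .
\end{equation*}

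The second step unrolls this inequality. Starting from $Q^{\widetilde{\pi}_{\mathrm{old}}}(s_t,a_t)=r+\gamma\mathbb{E}[V^{\widetilde{\pi}_{\mathrm{old}}}(s_{t+1})]$, we bound $V^{\widetilde{\pi}_{\mathrm{old}}}(s_{t+1})$ from above by the left-hand side of the one-step inequality. We then re-expand $Q^{\widetilde{\pi}_{\mathrm{old}}}(s_{t+1},a_{t+1})$ by the same soft Bellman relation and repeat along trajectories generated by $\pi_{\mathrm{new}}$. Each unrolling adds one more discounted term of the soft return of $\pi_{\mathrm{new}}$, while the remainder is multiplied by $\gamma$. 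With bounded rewards and entropies and $\gamma<1$, the remainder vanishes, so the limit is $Q^{\pi_{\mathrm{new}}}(s_t,a_t)$. This yields $Q^{\widetilde{\pi}_{\mathrm{old}}}\le Q^{\pi_{\mathrm{new}}}$ pointwise. Finally we take $\mathbb{E}_{a\sim\pi_{\mathrm{new}}}[\cdot-\log\pi_{\mathrm{new}}]$ of both sides and chain with the one-step inequality:
\begin{equation*}
V^{\pi_{\mathrm{new}}}(s_t)\ \ge\ \mathbb{E}_{\pi_{\mathrm{new}}}\big[Q^{\widetilde{\pi}_{\mathrm{old}}}-\log\pi_{\mathrm{new}}\big]\ \ge\ V^{\widetilde{\pi}_{\mathrm{old}}}(s_t).
\end{equation*}

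I expect the main obstacle to be the first step, because it relies on the assumption $\widetilde{\pi}_{\mathrm{old}}\in\Pi$. The mixture $g\pi_{\mathrm{interv}}+(1-g)\pi_{\mathrm{old}}$ need not lie in a parametric family such as Gaussians, and without feasibility the comparison in the $\arg\min$ fails. I would make this dependence explicit: in the tabular or unrestricted $\Pi$ setting it holds automatically; otherwise it must be taken as given, exactly as the lemma states. A secondary point is that the Bellman recursion for $Q^{\widetilde{\pi}_{\mathrm{old}}}$ must be the soft one, with the entropy of $\widetilde{\pi}_{\mathrm{old}}$ included, so that $V^{\widetilde{\pi}_{\mathrm{old}}}$ matches the value term appearing in the KL expansion. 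I would fix this convention in the operator of Equation~(\ref{eq:policy_evaluation}) before starting.
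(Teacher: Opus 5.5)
Your proposal is correct and follows the paper's proof essentially step for step. Feasibility of $\widetilde{\pi}_{\mathrm{old}}\in\Pi$ in the $\arg\min$ gives the one-step soft inequality once the state-dependent partition term cancels, and unrolling the soft Bellman equation of $Q^{\widetilde{\pi}_{\mathrm{old}}}$ along $\pi_{\mathrm{new}}$ yields $Q^{\widetilde{\pi}_{\mathrm{old}}}\le Q^{\pi_{\mathrm{new}}}$ and hence the value inequality.

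Your version is slightly more careful than the paper's in three places: you justify the vanishing remainder (which the paper leaves as ``$\vdots$''), you spell out the final chain $V^{\pi_{\mathrm{new}}}\ge\mathbb{E}_{\pi_{\mathrm{new}}}[Q^{\widetilde{\pi}_{\mathrm{old}}}-\log\pi_{\mathrm{new}}]\ge V^{\widetilde{\pi}_{\mathrm{old}}}$ (which the paper calls ``by definition''), and you rightly note that the operator in Equation~(\ref{eq:policy_evaluation}) must include the entropy term for the argument to be consistent.
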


\begin{equation} \label{eq:policy_update}
    \begin{aligned}
\pi_{\mathrm{new}}\left(\cdot\mid s_t\right) & =\arg \min _{\pi^{\prime} \in \Pi} D_{\mathrm{KL}}\left(\pi^{\prime}\left(\cdot \mid s_t\right) \Bigg\Vert 
\frac{\exp Q^{\widetilde{\pi}_{\mathrm{old}}}\left(s_t, \cdot\right)}{\log Z^{\widetilde{\pi}_{\mathrm{old}}}\left(s_t\right)}\right) \\
& =\arg \min _{\pi^{\prime} \in \Pi} J_{\widetilde{\pi}_{\mathrm{old}}}\left(\pi^{\prime}\left(\cdot \mid s_t\right)\right),\ \forall s_t \in S .
\end{aligned}
\end{equation}

\begin{proof}
Let $\widetilde{\pi}_{\mathrm{old}} \in \Pi$ and let $Q^{\widetilde{\pi}_{\mathrm{old}}}$ and $V^{\widetilde{\pi}_{\mathrm{old}}}$ be the corresponding soft state-action value and soft state value, be defined as
    \begin{equation}
        \begin{aligned}
            &\mathbb{E}_{a_t \sim \pi_{\mathrm{new}}}\left[\log \pi_{\mathrm{new}}\left(a_t \mid s_t\right)-Q^{\widetilde{\pi}_{\mathrm{old}}}\left(s_t, a_t\right)\right] \leq \mathbb{E}_{a_t \sim \widetilde{\pi}_{\mathrm{old}}}\left[\log \widetilde{\pi}_{\mathrm{old}}\left(a_t \mid s_t\right)-Q^{\widetilde{\pi}_{\mathrm{old}}}\left(s_t, a_t\right)\right].
        \end{aligned}
    \end{equation}
    $J_{\widetilde{\pi}_{\mathrm{old}}}\left(\pi_{\mathrm{new}}\left(\cdot \mid s_t\right)\right) 
    \leq J_{\widetilde{\pi}_{\mathrm{old}}}\left(\widetilde{\pi}_{\mathrm{old}}\left(\cdot \mid s_t\right)\right)$ is guaranteed, since we can always choose $\pi_{\mathrm{new}}=\widetilde{\pi}_{\mathrm{old}} \in \Pi$.
    Since partition function $Z^{\widetilde{\pi}_{\mathrm{old}}}$ depends only on the state, the inequality reduces to
    \begin{equation} \label{eq:inequality}
        \mathbb{E}_{a_t \sim \pi_{\mathrm{new}}}\left[Q^{\widetilde{\pi}_{\mathrm{old}}}\left(s_t, a_t\right)-\log \pi_{\mathrm{new}}\left(a_t \mid s_t\right)\right] \geq V^{\widetilde{\pi}_{\mathrm{old}}}\left(s_t\right) .
    \end{equation}

Next, consider the Bellman equation for $Q^{\widetilde{\pi}_{\mathrm{old}}}$:
    \begin{equation}\label{eq:Bellman improvement}
        \begin{aligned}
            & Q^{\widetilde{\pi}_{\mathrm{old}}}\left(s_t, a_t\right) =r\left(s_t, a_t\right)+\gamma \mathbb{E}_{s_{t+1} \sim p}\left[V^{\widetilde{\pi}_{\mathrm{old}}}\left(s_{t+1}\right)\right] \\
            & \leq r\left(s_t, a_t\right)+\gamma \mathbb{E}_{s_{t+1} \sim p}\Big[\mathbb{E}_{a_{t+1} \sim \pi_{\mathrm{new}}}\Big[Q^{\widetilde{\pi}_{\mathrm{old}}}\left(s_{t+1}, a_{t+1}\right) -\log \pi_{\mathrm{new}}\left(a_{t+1} \mid s_{t+1}\right)\Big]\Big] \\
            & \vdots \\
            & \leq Q^{\pi_{\mathrm{new}}}\left(s_t, a_t\right),\ \forall \left(s_t, a_t\right) \in \mathcal{S} \times \mathcal{A}.
        \end{aligned}
    \end{equation}
    
Given $Q^{\pi_{\mathrm{new}}}\left(s_t, a_t\right) \geq Q^{\widetilde{\pi}_{\mathrm{old}}}\left(s_t, a_t\right)$ for all $\left(s_t, a_t\right) \in \mathcal{S} \times \mathcal{A}$, by definition, we have $V^{\pi_{\mathrm{new}}}\left(s_t\right) \geq V^{\widetilde{\pi}_{\mathrm{old}}}\left(s_t\right)$ for all $s_t\in \mathcal{S}$.
\end{proof}

\begin{thm}[Convergence of GuidedSAC]
\label{theorem:conv}
    By repeatedly applying the guided policy evaluation in Equation~(\ref{eq:policy_evaluation}) and guided policy improvement in Equation~(\ref{eq:policy_update_app}), 
    the policy network $\pi_{\mathrm{new}}$ converges to the optimal policy $\pi^*$ if $V^{\widetilde{\pi}}\left(s\right) \geq V^{\pi}\left(s\right)$ for all $s \in S$.
    \begin{proof}
        Applying $\mathcal{T}^{\widetilde{\pi}}$ repeatedly will induce $Q^k \rightarrow Q^{\widetilde{\pi}}$, also $V^k \rightarrow V^{\widetilde{\pi}}$. Given $V^{\widetilde{\pi}}\left(s\right) \geq V^{\pi}\left(s\right)$ for all $s \in S$, updating $\pi$ according to Equation~(\ref{eq:policy_update_app}) will result in $V^{\pi_{\mathrm{new}}}\left(s\right) \geq V^{\widetilde{\pi}_{\mathrm{old}}}\left(s\right) \geq V^{\pi_{\mathrm{old}}}\left(s\right)$, which leads to the monotonically increasing of $V^{\pi}\left(s\right)$. As $V^{\pi}\left(s\right)$ is bounded, $\pi$ will eventually converge to $\pi^*$.
    \end{proof}
\end{thm}

\section{Theoretical Analysis of Action-Level Intervention in Policy Gradient Methods}
\label{append:pg_intervene}

\noindent In this section, we analyze how an \emph{action-level intervention} mechanism changes the policy-gradient signal.
Consider a standard policy-gradient objective
\begin{equation}
    \label{eq:pg_objective_std}
    L^{\text{PG}}(\phi)\ \propto\ -\mathbb{E}_{\tau \sim \pi_\phi}\left[\sum_{t} A_t \log \pi_\phi(a_t \mid s_t)\right],
\end{equation}
where \(A_t\) denotes an advantage estimator.

\paragraph{Action-level intervention policy.}
Let $g(s) \in [0, 1]$ be a state-dependent intervention gate and $\pi_{\text{interv}}(\cdot \mid s)$ be a fixed intervention policy. We define the \emph{intervened} behavior policy as the mixture:
\begin{equation}
    \label{eq:intervened_policy_mixture_pg}
    \widetilde{\pi}_\phi(a \mid s) = g(s) \pi_{\text{interv}}(a \mid s) + (1 - g(s)) \pi_\phi(a \mid s).
\end{equation}
When $g(s) \in \{0, 1\}$, Equation~\eqref{eq:intervened_policy_mixture_pg} reduces to a hard switch: the agent either fully follows $\pi_{\text{interv}}$ or $\pi_\phi$ at any given state.

\paragraph{Policy gradient under intervention.}
When trajectories $\tau$ are sampled according to the intervened policy $\widetilde{\pi}_\phi$, the surrogate objective is defined as:
\begin{equation}
    \label{eq:pg_objective_intervened}
    L^{\text{PG}}(\phi) \propto -\mathbb{E}_{\tau \sim \widetilde{\pi}_\phi} \left[ \sum_{t} A_t \log \widetilde{\pi}_\phi(a_t \mid s_t) \right].
\end{equation}
Taking the gradient with respect to the parameters $\phi$ yields:
$$\begin{aligned}
    \nabla_{\phi} L^{\text{PG}}(\phi) &\propto -\mathbb{E}_{\tau \sim \widetilde{\pi}_\phi} \left[ \sum_{t} A_t \frac{\nabla_{\phi} \widetilde{\pi}_{\phi}(a_t \mid s_t)}{\widetilde{\pi}_{\phi}(a_t \mid s_t)} \right] \\
    &= -\mathbb{E}_{\tau \sim \widetilde{\pi}_\phi} \left[ \sum_{t} A_t \frac{g(s_t) \nabla_{\phi} \pi_{\text{interv}}(a_t \mid s_t) + (1 - g(s_t)) \nabla_{\phi} \pi_{\phi}(a_t \mid s_t)}{\widetilde{\pi}_{\phi}(a_t \mid s_t)} \right].
\end{aligned}$$
Noting that $\nabla_{\phi} \pi_{\text{interv}}(a_t \mid s_t) = 0$, the gradient simplifies to:
$$\begin{aligned}
    \nabla_{\phi} L^{\text{PG}}(\phi) &\propto -(1 - g(s_t)) \mathbb{E}_{\tau \sim \widetilde{\pi}_\phi} \left[ \sum_{t} A_t \nabla_{\phi} \log \pi_{\phi}(a_t \mid s_t) \right] \\
    &= -\mathbb{E}_{\tau \sim \widetilde{\pi}_\phi} \left[ \sum_{t} A_t \frac{(1 - g(s_t)) \pi_{\phi}(a_t \mid s_t)}{\widetilde{\pi}_{\phi}(a_t \mid s_t)} \nabla_{\phi} \log \pi_{\phi}(a_t \mid s_t) \right].
\end{aligned}$$
In the case of a hard intervention where $g(s) \in \{0, 1\}$, the gradient contribution vanishes whenever $g(s_t) = 1$. Conversely, when $g(s_t) = 0$, the expression reduces to the standard policy gradient. Under these conditions, the gradient simplifies to:
\begin{equation}
    \nabla_{\phi} L^{\text{PG}}(\phi) \propto -\mathbb{E}_{s_t \sim \rho^{\widetilde{\pi}_\phi}, \, g(s_t)=0, \, a_t \sim \pi_\phi} \left[ \sum_{t} A_t \nabla_{\phi} \log \pi_{\phi}(a_t \mid s_t) \right].
\end{equation}
Thus, action-level intervention effectively acts as a \emph{mask} on policy updates, only non-intervened steps provide a learning signal for $\phi$. Prior work on intervention-aided policy optimization shows that, the actor receives no learning signal after an intervention occurs. As a result, the learned policy can remain unreliable in states that frequently trigger interventions, harming robustness. A common remedy is to add an auxiliary imitation loss that matches the learner to the intervention actions on intervened steps \citep{wang2018intervention}.

\section{Implementation Details}

We use stable-baselines3\footnote{\url{https://github.com/DLR-RM/stable-baselines3}} to implement GuidedSAC. We use MLP networks with two hidden layers for all networks. Feature extractor is FlattenExtractor, which flattens the input features into a vector.

\subsection{Hyperparameters}

\begin{table}[tbhp]
    \centering
    \caption{Hyperparameter settings for GuidedSAC across different environments.}
    \label{tab:hyperparameters}
    \resizebox{0.7\textwidth}{!}{
    \begin{tabular}{lccccccc}
    \toprule
    \textbf{Environment} & Guided Window & End Step & Batch Size & $\gamma$ & $\tau$ & $\alpha$ & Start \\ \midrule
    Blackjack & 2000 & 5000 & 256 & 0.5 & 0.005 & auto & 1000 \\
    CliffWalking & 2000 & 10000 & 1024 & 0.99 & 0.005 & 0.01 & 0 \\
    FrozenLake & 100 & 5000 & 64 & 0.99 & 0.005 & 0.01 & 0 \\
    Taxi & 2000 & 5000 & 256 & 0.99 & 0.005 & 0.01 & 0 \\
    MountainCar & 1000 & 3000 & 256 & 0.99 & 0.005 & auto & 0 \\
    Humanoid & 1000 & 900,000 & 256 & 0.99 & 0.005 & auto & 100 \\ \bottomrule
    \end{tabular}
    }
\end{table}



Querying the Advisor LLM at every time step is expensive in practice. To solve this, we use a window mechanism where each intervention lasts for a fixed duration of steps. This guided window defines how long the agent follows the LLM guidance before the Advisor is consulted again to evaluate performance. This approach reduces API latency and ensures the agent collects experience under a consistent policy without violating our theoretical modeling.

Several hyperparameters govern the coordination between the LLM and the agent. The guided window allows for periodic performance checks while the end guidance timestep sets the final point where all external help is disabled. Standard learning parameters include the batch size for buffer sampling and the discount factor gamma ($\gamma$) to prioritize long-term rewards. Training stability is maintained by the soft update coefficient tau ($\tau$) and the entropy coefficient alpha ($\alpha$) which balances exploration and exploitation. Finally, the learning starts parameter defines the initial phase of random exploration used to fill the replay buffer before gradient updates begin.

The intrinsic reward coefficients are set to $10^{-4}$ for most experiments. For CliffWalking and Taxi, this coefficient is increased to $1$.

\begin{figure}[btp]
    \centering
    \includegraphics[width=1.0\linewidth]{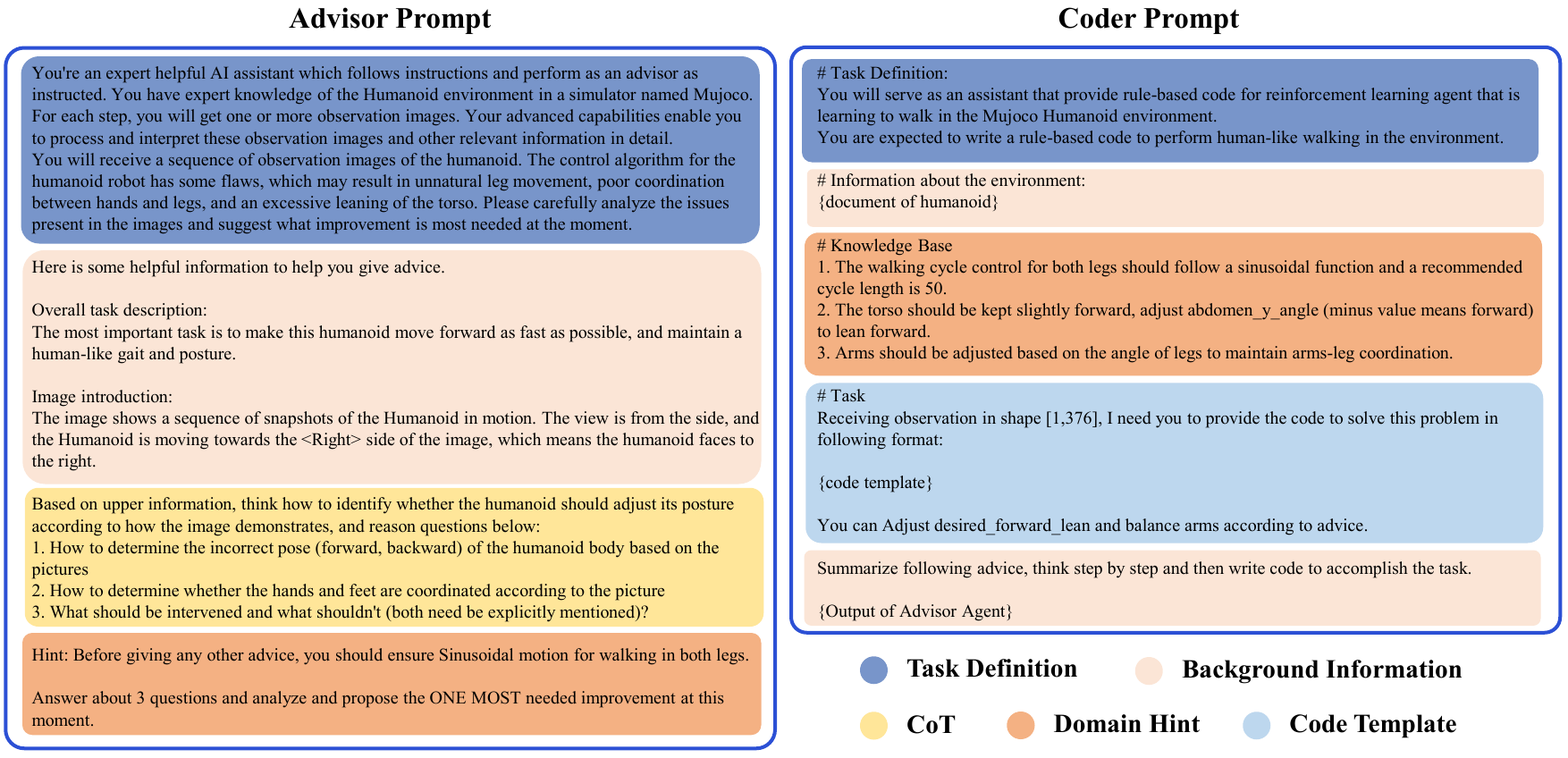}
    \caption{Prompts for the advisor and coder LLMs.}
    \label{fig:prompts}
\end{figure}

\subsection{LLM Configuration}
The proposed framework incorporates two specialized LLMs to address distinct functional requirements. We utilize the qwen3-vl-plus model as the Advisor LLM because of its advanced visual perception and spatial reasoning capabilities. For the Coder LLM, we employ the qwen3-max-preview model to leverage its superior performance in automated code synthesis and algorithmic generation. Both models are integrated into our experimental pipeline through remote API calls.

\subsection{Prompt Details} \label{sec:prompt_details}
Fig.~\ref{fig:prompts} shows the prompts for the advisor and coder LLMs in Humanoid task. The prompt for the advisor LLM is designed to evaluate whether intervention is necessary based on recent trajectory performance. The prompt for the coder LLM is designed to generate rule-based policies with sufficient context information when intervention is triggered.
\section{Pseudo-Code of GuidedSAC}

\begin{algorithm}[h]
\caption{Guided Soft Actor-Critic}
\label{alg:soft_actor_critic}
\begin{algorithmic}[1]
\STATE Initialize $\psi$, $\bar{\psi}$, $\theta$, $\phi$, $intervene=False$;
\FOR{each iteration}
    \IF{$intervene$}
        \FOR{each environment step}
        \STATE $\widetilde{a}_t \sim \widetilde{\pi}_\phi(a_t|s_t)$;
        \STATE $s_{t+1} \sim p(s_{t+1}|s_t, \widetilde{a}_t)$;
        \STATE $\widetilde{\mathcal{D}} \gets \widetilde{\mathcal{D}} \cup \{(s_t, \widetilde{a}_t, r(s_t, \widetilde{a}_t), s_{t+1})\}$;
        \ENDFOR
    \ELSE
        \FOR{each environment step}
        \STATE $a_t \sim \pi_\phi(a_t|s_t)$;
        \STATE $s_{t+1} \sim p(s_{t+1}|s_t, a_t)$;
        \STATE $\widetilde{\mathcal{D}} \gets \widetilde{\mathcal{D}} \cup \{(s_t, a_t, r(s_t, a_t), s_{t+1})\}$;
        \ENDFOR
    \ENDIF
    \STATE $intervene \gets I(s_{\leq T})$;
    \FOR{each gradient step}
        \STATE $\psi \gets \psi - \lambda_V \nabla_\psi J_V(\psi)$;
        \STATE $\theta \gets \theta - \lambda_Q \nabla_{\theta} J_Q(\theta)$;
        \STATE $\phi \gets \phi - \lambda_\pi \nabla_\phi J_\pi(\phi)$;
        \STATE $\bar{\psi} \gets \tau \psi + (1 - \tau) \bar{\psi}$;
    \ENDFOR
\ENDFOR
\end{algorithmic}
\end{algorithm}

\section{Limitation \& Future Work}
The performance gains from an LLM-based supervisor fundamentally rely on the LLM's ability to propose effective solutions that can be distilled into a rule-based policy. Such a rule-based policy represents a trade-off between cost and effectiveness: it avoids querying the LLM at every environment step, but its representational capacity is limited, and it can struggle to process rich observations (e.g., raw images) or to coordinate high-dimensional behaviors. For example, in vision-centric environments like Minecraft, the lack of sophisticated visual processing within the rule-based controller makes it difficult to extract effective policies from image-based states. Furthermore, in high-dimensional continuous control tasks such as the Unitree G1\footnote{\url{https://github.com/google-deepmind/mujoco_menagerie/tree/main/unitree_g1}} (29 dimensions), even if the LLM can understand the environment and provide reasonable advices in a subset of joints, the resulting rule-based intervention may have only marginal impact on overall performance, making exploration of high-value trajectories less likely. In the future, as LLM inference costs and latency continue to decline, it may become practical to deploy powerful LLMs for direct real-time interventions, thereby circumventing these limitations. Importantly, the theoretical framework developed in this work remains fully applicable to such settings.

\end{document}